\newtheorem{theorem}{Theorem}
\newtheorem{lemma}{Lemma}
\newtheorem{definition}{Definition}
\newtheorem{remark}{Remark}
\newtheorem{assumption}{Assumption}
\def\PP{\mathbb{P}}
\def\RR{\mathbb{R}}
\def\cA{\mathcal{A}}
\def\cD{\mathcal{D}}
\def\cL{\mathcal{L}}
\def\cP{\mathcal{P}}
\def\cS{\mathcal{S}}
\def\cU{\mathcal{U}}
\def\cW{\mathcal{W}}
\newcommand\myas{\stackrel{\mathclap{\normalfont\mbox{a.s.}}}{=}}
\crefname{section}{Sec.}{Secs.}
\Crefname{section}{Section}{Sections}
\Crefname{table}{Table}{Tables}
\crefname{table}{Tab.}{Tabs.}
\begin{document}

\title{Deep Unlearning via Randomized Conditionally Independent Hessians}

\author{Ronak Mehta\thanks{Joint First Authors.} \textsuperscript{\rm 1}\\
{\tt\small ronakrm@cs.wisc.edu}
\and
Sourav Pal\textsuperscript{*\rm 1}\\
{\tt\small spal9@wisc.edu}
\and
Vikas Singh\textsuperscript{\rm 1} \\
{\tt\small vsingh@biostat.wisc.edu}
\and
Sathya N. Ravi\textsuperscript{\rm 2} \\
{\tt\small sathya@uic.edu}
\and
{\textsuperscript{\rm 1}University of Wisconsin-Madison $\quad$
\textsuperscript{\rm 2}University of Illinois at Chicago}
}

\maketitle

\begin{abstract}
Recent legislation has
led to interest in {\em machine unlearning}, i.e., removing specific training samples from a {\em predictive} model as if they never existed in the training dataset. 
Unlearning may also be required due to  corrupted/adversarial data or simply a user's updated privacy requirement.
For models which require no training ($k$-NN), 
simply deleting the closest original sample can be effective. 
But this idea is inapplicable to models which learn richer 
representations.
Recent ideas leveraging optimization-based updates
scale poorly with the model dimension $d$,  
due to 
inverting the Hessian of the loss function. 
We use a variant of a new conditional independence coefficient, 
L-CODEC, to identify a subset of the model parameters with the most semantic overlap on an individual sample level. 
Our approach completely avoids the need to invert a (possibly) huge matrix. 
By utilizing a Markov blanket selection, we premise that L-CODEC is also suitable for deep unlearning,
as well as other applications in vision.
Compared to alternatives, L-CODEC makes approximate unlearning possible 
in settings that would otherwise be infeasible, including vision models used for face recognition, person re-identification and NLP models that may require unlearning samples identified for exclusion.
Code is available at \url{https://github.com/vsingh-group/LCODEC-deep-unlearning}
\end{abstract}

\section{Introduction}
As personal data becomes a valuable commodity, legislative efforts have begun to push back on its widespread collection/use particularly for training ML models. Recently, a focus is the ``right to be forgotten" (RTBF), i.e., the right of an individual's data to be deleted from a database (and derived products).
Despite existing legal frameworks on fair use, industry scraping has led to personal images being used without consent, e.g. \cite{Exposing}.
Large datasets are not only stored for descriptive statistics, but used in training large models.
While regulation (GDPR, CCPA) has not specified the extent to which data must be forgotten, it poses a clear question: is  deletion of the data enough, or does a model trained on that data also needs to be updated?

Recent work by \cite{carlini2019secret,carlini2020attack} has identified scenarios where trained models are vulnerable to attacks that can reconstruct input training data. More directly, recent rulings by the Federal Trade Commission \cite{ftc,ftc2} have ordered companies to fully delete and destroy not only data, but also any model trained using those data.
While deletion and (subsequent) full model retraining without the deleted samples is possible, most in-production models require weeks of 
training and review, with extensive computational/human resource cost. With additional deletions, it is infeasible to retrain each time a new delete request comes in. 
So, how to update a model ensuring the data is deleted without retraining?

{\bf Task.} Given a set of input data $\cS: \{z_i\}_{i=1}^n \sim \mathcal{D}$ of size $n$, training simply identifies a hypothesis $\hat{w} \in \cW$  via an iterative scheme $w_{t+1} = w_t - g(\hat{w},z')$ until convergence, where $g(\cdot,z')$ is  a stochastic gradient of a fixed loss function. Once a model at convergence is found, \textit{machine unlearning} aims to identify an update to $\hat{w}$ through an analogous {\em one-shot unlearning update}:
\begin{align}\label{eq:unlearn}
    w' = \hat{w} + g_{\hat{w}}\left(z'\right),
\end{align}
for a {\em given} sample $z' \in \cS$ that is to be {\bf unlearned}.

\noindent\textbf{Contributions.} We address several computational issues with existing approximate formulations for unlearning by taking advantage of a new statistical scheme for sufficient parameter selection. 
First, in order to ensure that a sample's impact on the model predictions is minimized, we propose a measure for computing conditional independence called L-CODEC which  identifies the Markov Blanket of parameters to be  updated. 
Second, we show that the L-CODEC identified Markov Blanket enables unlearning in previously infeasible deep models, scaling to networks with hundreds of millions of parameters. 
Finally, we demonstrate the ability of L-CODEC to unlearn samples and entire classes on networks, from CNNs/ResNets to transformers, including face recognition and person re-identification models.

\begin{figure*}
    \centering
    \includegraphics[height=1.5in,trim={0 12cm 5cm 2.5cm},clip]{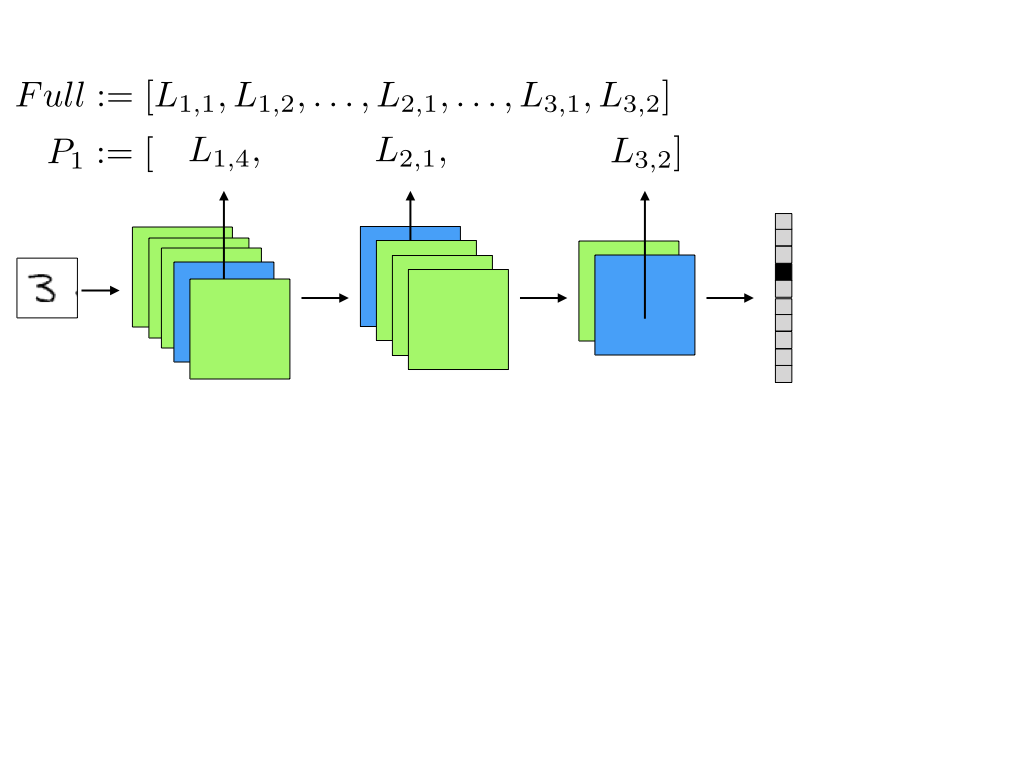}
    \includegraphics[height=1.5in,trim={0 1cm 0 0},clip]{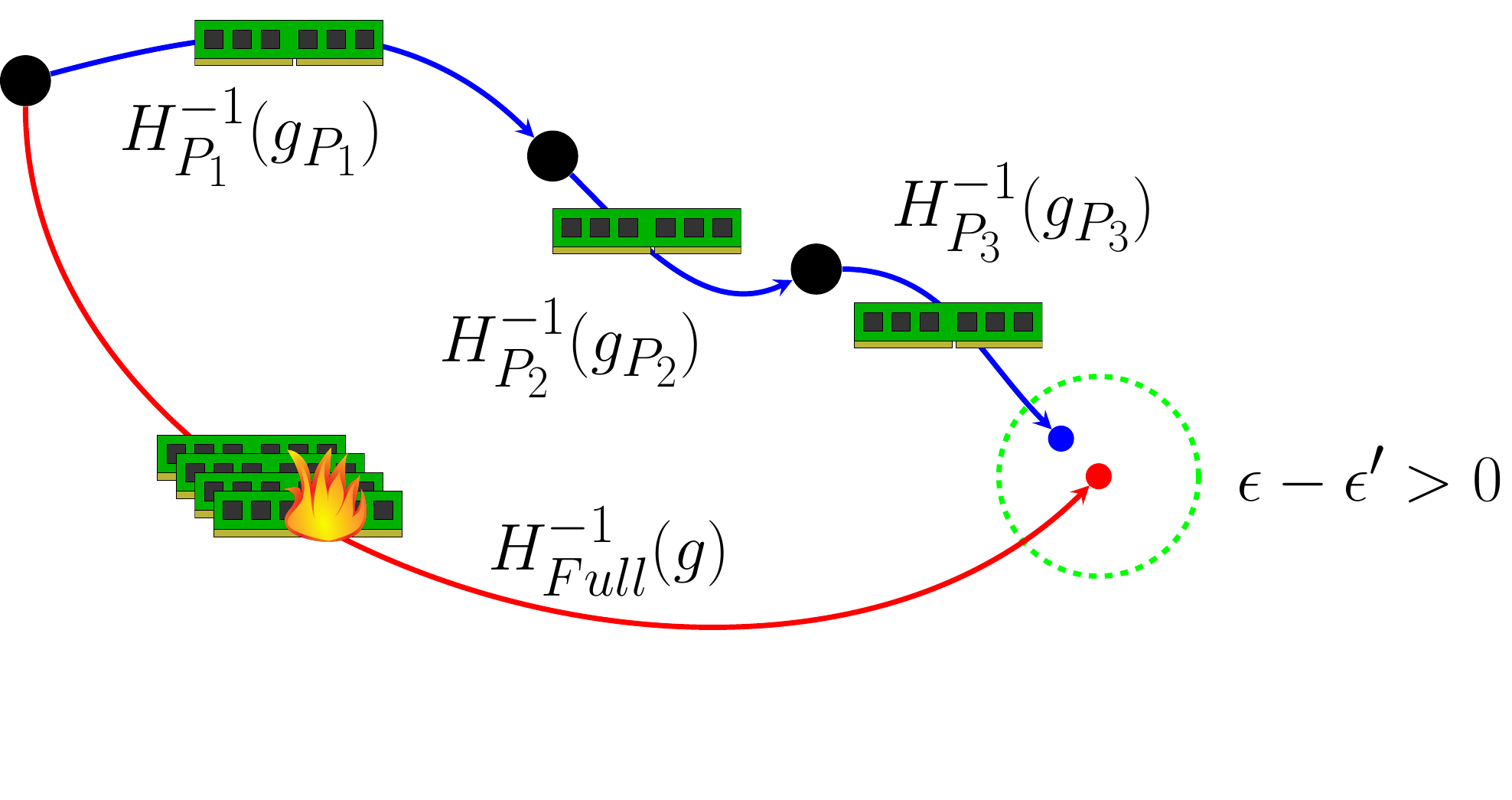}
    \vspace{-20pt}
    \caption{\label{fig:main}Large deep learning networks typically associate specific subsets of network parameters, blocks (blue), to specific samples in the input space.
    Traditional forward or backward passes may not reveal these blocks: high correlations among features may not distinguish important ones. Input perturbations can be used to identify them in a probabilistic, distribution-free manner. These blocks can then be unlearned together in an efficient block-coordinate style update (right, blue lines), approximating an update to the full network which requires a costly/infeasible full Hessian inverse (red line).}
\end{figure*}

\section{Problem Setup for Unlearning}
Let $\cA$ be an algorithm that takes as input a training set $\cS$ and outputs a hypothesis $w \in \cW$, defined by a set of $d$ parameters $\Theta$. 
An unlearning scheme $\cU$ takes as input a sample $z' \in \cS$ used as input to $\cA$, and ideally, outputs an {\bf updated} hypothesis $w' \in \cW$ where $z'$ has been deleted from the model.
%
%
An unlearning algorithm should output a hypothesis that is close or equivalent to one that would have been learned had the input to $\cA$ been $\cS \setminus z^\prime$. A framework for this goal was given by \cite{ginart2019making} as,
\begin{definition}[$(\epsilon,\delta)-$ forgetting]\label{def:forget}
For all sets $\cS$ of size $n$, with a ``delete request'' $z' \in \cS$, an unlearning algorithm $\cU$ is $(\epsilon, \delta)-$forgetting if
\begin{align}
\PP(\cU(\cA(S), z') \in \cW) \leq e^\epsilon\PP(\cA(\cS\setminus z') \in \cW) + \delta
\end{align}
\end{definition}
In essence, for an existing model $w$, a good unlearning algorithm for request $z' \in \cS$ will output a model $\hat{w}$ close to the output of $\cA(S \setminus z')$ with high probability.

\begin{remark}
Definition \ref{def:forget} is similar to the standard definitions of differential privacy. The connection to unlearning is: if an algorithm is $(\epsilon, \delta)-$forgetting for unlearning, then it is also differentially private. 
\end{remark}

If $\cA$ is an empirical risk minimizer for the loss $f$, let
\begin{align}
    \cA : (\cS, f) \rightarrow \hat{w}
\end{align}
$\hat{w} = \arg\min F(w)$ and $F(w) = \frac{1}{n}\sum_{i=1}^n f(w, z_i).$ 
Recall $g(z')$ from \eqref{eq:unlearn}:  
our unlearning task 
essentially involves 
identifying 
the form of $g(z')$ for which the update in \eqref{eq:unlearn} is $(\epsilon,\delta)$-forgetting. If an oracle provides this information, we have 
accomplished the unlearning task.

The difficulty, 
as expected, tends to 
depend on $f$ and $\cA$. 
Recent unlearning results have identified forms of $f$ and $\cA$ where such a $g(z')$ exists. The authors in \cite{sekhari2021remember} define $g(z') = \frac{1}{n-1}H'^{-1}\nabla f(\hat{w},z')$, where
\begin{align}\label{eq:sekhariunlearn}
    H' &= \frac{1}{n-1} \left(n\nabla^2 F(\hat{w}) - \nabla^2 f(\hat{w},z')\right),
\end{align}
with additive Gaussian noise $w' = w' + N(0,\sigma^2)$ scaling as a function of $n, \epsilon, \delta$, and the Lipschitz and (strong) convexity parameters of the loss $f$. We can interpret the update using \eqref{eq:sekhariunlearn} from the optimization perspective as a trajectory ``reversal": starting at a random initialization, the first order (stochastic gradient) trajectory of  ${w}$ (possibly) {\em with}  $z'$ is reversed using {\em residual} second order curvature information (Hessian) at the optimal $\hat{w}$ in \eqref{eq:sekhariunlearn}, achieving unlearning. This is shown to satisfy Def.~\ref{def:forget}, and only incurs an additive error that scales by $O(\sqrt{d}/n^2)$ in the gap between $F(w')$ and the global minimizer $F(w^*)$ over the ERM $F(\hat{w})$. 


{\bf Rationale for approximate schemes.} From the reversal of $w$ optimization perspective, it is clear that there may be other choices to achieve unlearning.
For a practitioner interested in unlearning, the aforementioned algorithm (as in \eqref{eq:sekhariunlearn}) can be directly instantiated if one has extensive computational resources.
Indeed, in settings where it is not directly possible to compute the Hessian inverse necessary for $H'^{-1} \nabla f(\hat{w},z')$, we must consider alternatives. 

{\bf A potential idea.} Our goal is to identify a form of $g(z')$ that {\bf approximates} $H'^{-1} \nabla f(\hat{w},z')$. 
Let us consider the Newton-style update suggested by 
\eqref{eq:sekhariunlearn}
as a smoothing of a traditional first order gradient step. 
The inverse Hessian is a weighting matrix, appropriately scaling the gradients based on the second order difference between the training set mean point $F(\hat{w})$ and at the sample of interest $f(\hat{w},z')$. 
This smoothing can also be seen from an information perspective: the Hessian in this case corresponds to a Fisher-style information matrix, and its inverse as a conditional covariance matrix \cite{Golatkar_2021_CVPR,golatkar2020forgetting}.
It is not hard to imagine that from this perspective, if there are {\em specific set of parameters} that have {\em small gradients} at $f(\hat{w},z')$ or if the information matrix is zero or small, then we need not consider their effect. 

{\bf Examples of this intuition in vision.} \cite{bau2017network,fong2018net2vec,Sun_2019_ICCV} and others have shown that models trained on complex tasks tend to \textit{delegate} subnetworks to specific regions of the input space. That is, parameters and functions within networks tend to (or can be encouraged to) act in \textit{blocks.}
For example, activation maps for different filters in a trained (converged) CNN model show differences for different classes, especially for filters closer to the output layer.
We formalize this observation as an assumption for samples in the training set.

\begin{assumption}\label{assum:sub}
For all subsets of training samples $S \subset \cS$, there exists a subset of trained model parameters $P^* \subset \Theta$ such that
\begin{align}\label{eq:assum}
    f(S) \bot w_{\Theta\setminus P}^* | w_{P}^*
\end{align}
\end{assumption}
Due to the computational issues discussed above, 
if we could make such a simple/principled selection scheme practical, it may offer significant 
benefits.


\section{Related Work}
To contextualize our contributions, 
we briefly review existing proposals for machine unlearning. 

\noindent\textbf{Na\"ive, Exact Unlearning.}
A number of authors have proposed methods for exact unlearning, in the case where $(\epsilon=0, \delta=0)$. SVMs by \cite{romero2007incremental,karasuyama2009multiple}, Na\"ive Bayes Classifiers by \cite{cao2015towards}, and $k$-means methods by \cite{ginart2019making} have all been studied. 
But these algorithms do not translate to stochastic models with millions of parameters.

\noindent\textbf{Approximate Unlearning.} 
With links to fields such as robustness and privacy, we see more developments in approximate unlearning under Definition~\ref{def:forget}. 
The so-called $\epsilon$-certified removal by \cite{guo2019certified} puts forth similar procedures when $\delta=0$, and the model has been trained in a specific manner.
\cite{guo2019certified,izzo2020approximate} provide updates to linear models and the last layers of networks, and 
\cite{golatkar2020forgetting,golatkar2020eternal} provide updates based on linearizations that work over the full network, and follow-up work by \cite{Golatkar_2021_CVPR} presents a scheme to unlearn under an assumption that some samples will not need to be removed.

Other recent work has taken alternative views of unlearning, which do not require/operate under probabilistic frameworks, see \cite{bourtoule2021machine,neel2021descent}. These schemes present good guarantees in the absolute privacy setting, but they require more changes to  pipelines (sharding/aggregating weaker models) and scale unsatisfactorily in large deep learning settings.

\section{Randomized Markovian  Block Coordinate Unlearning}
If there exist entries of the vector $g(z')= H'^{-1} \nabla f(\hat{w}, z')$ that we can, through {\em some} procedure, identify as zero, then we can simply avoid computing such zero coordinates. Not only can we zero out those particular entries in the inverse and the gradient, but we can take advantage of the blockwise inverse to {\em completely remove those parameters from all computations.} If possible, it would  immediately change the complexity from $O(d^3)$ to $O(p^3)$, where $p\ll d$ is the size of the subset of parameters that we know are \textit{sufficient} to update.

Let $P \subseteq \Theta:= \{1,\ldots, d\}$ be the index set of the parameters that are ``sufficient'' to update. A direct procedure may be to identify this subset $P$ with
\begin{align}\label{eq:normsel}
P = \underset{P \in \cP(\Theta)}{\arg\min} \ ||\tilde{w} - \tilde{w}_P||,
\end{align}
where $\cP(\Theta)$ is the {\em power set} of the elements in $\Theta$ and $\tilde{w}_P$ is the subset of the parameters we are interested in updating. 
Note that a simple solution to this problem {\em does} exist: choosing the $p=|P|$ parameters with the largest change will minimize this distance for typical norms. This can be achieved by thresholding the updates $g(z^\prime)$ for $\hat{w}$. However, this \textit{requires computing the full update for $g(z^\prime)$}. We want a preprocessing procedure that performs the selection \textit{before} computation of $g(z^\prime)$ is needed.

\noindent\textbf{A probabilistic angle for selection.} 
We interpret a deep network $\cW$ as a functional on the input space $\cD$. This perspective is common in statistics for variable selection (e.g., LASSO), albeit used {\em after} the entire optimization procedure is performed i.e., at the optimal solution. The only difference here is that we use it at approximately optimal solutions as given by ERM minimization. Importantly, this view allows us to identify regions in $\cW$ that contain the most information about a query sample $z'$. 
We will formalize this intuition using recent results for conditional independence (CI) testing.
Finding $w_P$ above should also satisfy
\begin{align}\label{eq:paramCI}
    z' \bot w_{\Theta\setminus P} | w_P
\end{align}
This CI formulation is well studied within graphical models. Many measures and hypothesis tests have been proposed to evaluate it.
The {\em coefficient of conditional dependence} (CODEC)  in \cite{codec}, along with their algorithm for ``feature ordering", FOCI, at first seems to offer a solution to  \eqref{eq:paramCI}, and in fact, can be implemented ``as is'' for shallow networks. (Review of other CI tests are in the appendix.)



\paragraph{Using CODEC directly for Deep Unlearning is inefficient.} There are two issues: First, when applying CODEC to problems with a very large $n$ with discrete values, the cost of tie-breaking for computing nearest neighbors can become prohibitive. Second, $z'$ is not a random variable for which we have a number of instances. We defer discussion of the second issue  to Section~\ref{sec:deepunlearn}, and address the first issue here.

Consider the case where a large number of elements have an equal value. With an efficient implementation using $kd-$trees, identifying the nearest-neighbor as required by CODEC would still require expanding the nodes of all elements with equal value. As an example, if we are looking for the nearest neighbor to a point at the origin and there are a large number of elements on the surface of a sphere centered at the origin, we still require checking all entries and expanding their nodes in the tree, even when we know that they are all equal for this purpose.

Interestingly, this problem has a relatively elegant solution.
We introduce a randomized version of CODEC, L-CODEC. For variables $A,B,C$:
\begin{align}\label{eq:lcodec}
T_L := T\left(\tilde{B}, \tilde{C} | \tilde{A} \right),
\end{align}
where $\tilde{B} = B + N(0,\sigma^2)$, and similarly for $\tilde{C}, \tilde{A}$.
This additive noise can simply be scaled to the inverse of the largest distance between any points in the set.
By requiring this noise to be smaller than any distance between items in the set, the ranking will remain the same between unique discrete values, and will be perturbed slightly for equal ones.
In expectation, this will still lead to the true dependence measure. The noise addition is consistent with the Randomization criterion for conditional independence -- for random variables $A, B, C$ in Borel spaces, $A\bot B | C$ \textit{iff} $A \myas h(B, U)$ for some measurable function $h$ and uniform random variable $U \sim \text{Uniform}(0,1)$ which is independent of $(B,C)$ as in \cite{ptheory}. 

\begin{remark}\label{rem:sens}
An altered version of this setup also gives us a form of explainability, where we can apply sensitivity analysis to each input feature or pixel and estimate its effect on the output via a similarly randomized version of the  Chatterjee rank coefficient $T(A,B)$, proposed by \cite{chatterjee2020new}.
\end{remark}



\subsection{Efficient Subset Selection that is also Sufficient for Predictive Purposes}
The above test is good for \eqref{eq:paramCI} if we know {\em which} subset $P\in \cP(\Theta)$ to test. 
Recent work by \cite{bullseye} proposes a selection procedure using an iterative scheme to slowly build the sufficient set, adding elements which maximally increase the information explained in the outcome of interest.
While it is efficient (polynomial in size), we must know the maximal degree. A priori, we may have no knowledge of what this size is, and for parameter subsets it may be very high.

\begin{figure}
    \centering
    \includegraphics[width=\columnwidth,trim={0 18cm 4cm 0},clip]{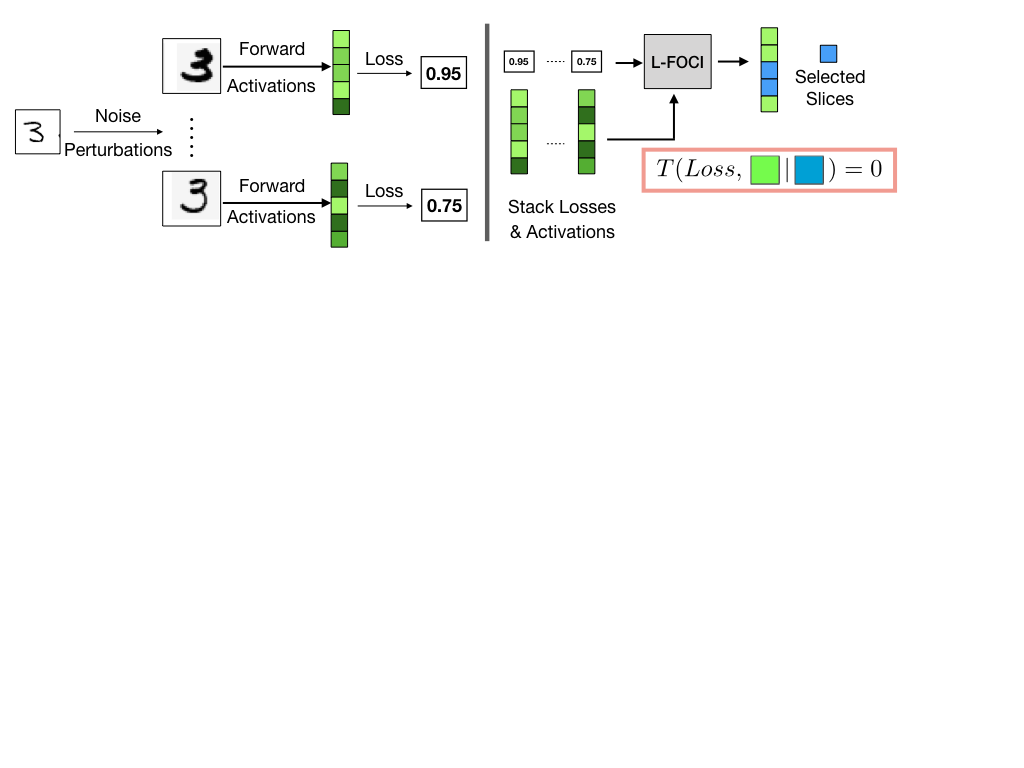}
    \caption{A sample is perturbed and passed through the network. Activations are aggregated alongside losses and fed to L-FOCI. Selected rows represent slices of the corresponding layer that are sufficient for unlearning.}
    \label{fig:lfoci}
\end{figure}

When using L-CODEC, we can use a more straightforward Markov Blanket identification procedure adapted from \cite{codec}. FOCI more directly selects which variables are valuable for explaining $z'$, and in fact, is proven to identify the sufficient set (Markov Blanket) with a reasonable number of samples. Briefly, in our L-FOCI, the sufficient set is built incrementally with successive calls to L-CODEC, moving the most ``dependent" feature from the independent set to the sufficient set.
See appendix for details.

\paragraph{Summary.} This procedure alleviates the first issue in terms of sufficient subset or Markov Blanket selection; compared to existing methods using information-theoretic measures that require permutation testing, L-FOCI directly estimates the change in variance when considering a proposal to add to the set.
Now, we discuss how this selection can help identify sets of parameters that can be updated.

\section{Deep Unlearning via L-FOCI Hessians}
\label{sec:deepunlearn}

Our input samples to scrub $z^\prime$ are not random variables for which we have samples or distributional assumptions, nor are our parameters. In this case, 
a perturbation-based scheme may be useful when attempting to generate samples for unknown distributions.

Considering Assump.~\ref{assum:sub}, when only some parameters are useful for the final outcome on an input sample $z^\prime \in S$, the effect of those parameters can be measured through activations due to the forward pass of a model. We estimate the conditional independence test in \eqref{eq:assum} through activations as
\begin{align}
    f(z^\prime) \bot a_{\Theta\setminus P}^* | a_P^*,
\end{align}
where $a_P$ for some parameter subset $P \subseteq \Theta$ is defined as the linear activations generated by the forward pass through the model.
This formulation relates to a generalized version of the solution in \S3 of \cite{bullseye}, where conditional mutual information is estimated via feature mappings.

As an example, if a network has linear layers $\cL$, a simple linear layer $l \in \cL$ with parameters $w_l \in \RR^{a \times b}$ would have activations $a_l \in \RR^b$, with $a_l = w_l a_{l-1}$. For each entry $a_{l,j}$ in the vector $a_l$, the associated parameters in the layer are $w_l[:,j]$. Thus, we break up the network into influential \textit{slices.} These slices can be seen as a finer view of the parameter space compared to typical layerwise selection, but coarser than a fully discrete one. Next, $\cL$ now refers to the collection of these slices, with a specific slice as $l$.

The tuple of variables we need samples from is now
\begin{align}
    \{a_1, \ldots a_{|\cL|}, \cL(z^\prime)\}
\end{align}
We can obtain samples from this set by perturbing the input and consecutively collecting activations along all weight slices during the computation of the loss. For a particular perturbation $\xi^j \sim N(0,\sigma^2)$,
\begin{align}
    x_i^j = x_i + \xi^j; \quad
    l^j, a_L^j = \{l(x_i^j), a_1^j, \ldots a_{|\cL|}^j\}
\end{align}
The tuples $(l^j, a_L^j)$ serve as samples for our conditional independence test, 
\begin{align}
    (P \subseteq \Theta) &= \mbox{L-FOCI}((l^j, a_\cL^j)_{j=1}^m)
\end{align}
for $J:=\{j \in 1,\ldots,m\}$ perturbations (see Figure~\ref{fig:lfoci}).

In Alg.~\ref{alg:blockunlearn}, the activations are collected using hooks within the forward pass. 
First, gradients at the last and penultimate epoch for full training are stored during the original training pass. Given a sample to unlearn, we compute L-FOCI over the perturbed activations and losses generated by the forward pass, and identify which parameter sets will be updated. We compute the approximate Hessian over these parameters via finite differences for both the full model and for the model only over the sample of interest. Finally, we apply the blockwise Newton update to the subset of parameters as in \eqref{eq:unlearn} with appropriate DP noise as in \cite{sekhari2021remember}.
\begin{algorithm}
\small
\SetAlgoLined
\KwData{A trained model $\hat{w}$, gradient vectors $\nabla_1 F(\hat{w}), \nabla_2 F(\hat{w})$, sample $z' \in \cS$ to unlearn.}
\KwResult{model $w'$ with $z'$ removed.}
1. \For{$j \in \{1,\ldots, m\}$ perturbations}{
    $\xi^j \sim N(0,\sigma^2)$ \\
    $z^{\prime,j} = z^\prime + \xi^j$ \\
    $l^j, a^j = f(z^{\prime,j})$ \\
}
2. Compute $P* = \mbox{L-FOCI}(l^J, a^J)$. \\
3. Compute $\nabla^2_{P} F(\hat{w}, z^\prime)$ via finite differences. \\
4. Update:
\begin{align}
    H_P' &= \frac{1}{n-1}\left(n \nabla^2_{P} F(\hat{w}) - \nabla^2_{P} f(\hat{w}, z')\right) \\
    w_{P}' &= \hat{w}_{P} + \frac{1}{n-1}H_{P}'^{-1} \nabla f(\hat{w}, z')_{P} \\
    w'_{\Theta\setminus P} &= \hat{w}_{\Theta\setminus P}
\end{align}
 \caption{\label{alg:blockunlearn} Unlearning via Conditional Dependence Block Selection}
\end{algorithm}

\noindent\textbf{Computational Gains.}
A direct observation is that now we are doing sampling, which adds a linear computational load.
However, directly updating all parameters requires $O(d^3)$ computation due to matrix inversion, while this procedure requires $O(md + dm\log m + p^3)$, for the forward passes, FOCI algorithm, and subsequent subsetted matrix inversion. For any reasonable setting, we have $p \ll d$, and so this clearly offers significant practical advantages.

\subsection{Theoretical Analysis}
By definition, any neural network as described above is actually a Markov Chain: we know that the output of a layer is conditionally independent of the penultimate one given the previous one, and clearly a change in one layer will propagate forward through the rest of the network.
However, when trained for a task with a large number of samples, the influence or ``memory" of the network with respect to a specific sample may not be clear.
While the output of the layers may follow a Markov Chain, the parameters in the layers themselves do not, and their influence on a sample through the forward pass may be highly dependent or correlated.
Practically, we would hope that unlearning samples at convergence does not cause too much damage to the model's performance on the rest of the input samples.
Following traditional unlearning analysis, we can bound the \textit{residual gradient norm} to relieve this tension.
\begin{lemma}
The gap between the gradient residual norm of the FOCI Unlearning update in Algorithm~\ref{alg:blockunlearn} and a full unlearning update via \eqref{eq:sekhariunlearn},
\begin{align}
||\nabla F(w^-_{Foci},D')||_2 - ||\nabla F(w^-_{Full},D')||_2
\end{align}
shrinks as $O(1/n^2)$.
\end{lemma}
\begin{proof}
The full proof is in the appendix. Main idea: Because we only update a subset of parameters, the gradients for the remainder should not change too much. Any change to a selected layer only propagates to other layers by $1/n$, and a Taylor expansion about the new activation for that layer gives the result.
\end{proof}

\noindent\textbf{How L-CODEC achieves acceleration for Unlearning?}
Sampling with weights proportional to the Lipschitz constant of individual filters/layers is an established approach in optimization, see \cite{gorbunov2020unified}. We argue that L-CODEC computes an approximation to optimal sampling probabilities.
Under a mild assumption that the sampling probabilities have \emph{full} support, it turns out that correctness of our approximate (layer/filter selection) procedure can be guaranteed for unlearning purposes using recently developed optimization tools, see \cite{gower2019sgd}. By adapting results from \cite{gorbunov2020unified}, we can show the following, summarizing the main result of our slice-based unlearning procedure.
\begin{theorem}
Assume that layer-wise sampling probabilities are nonzero. Given unlearning parameters $\epsilon,\delta$, the unlearning procedure in Alg~ \ref{alg:blockunlearn} is $(\epsilon',\delta')-$forgetting where $\epsilon'>\epsilon,\delta'>\delta$ represent an arbitrary  precision (hyperparameter) required for unlearning. Moreover, iteratively applying our algorithm converges exponentially fast (in expectation) w.r.t. the precision gap, that is, takes (at most) $O(\log\frac{1}{\mathbf{g_{\epsilon}}}\log\frac{1}{\mathbf{g_{\delta}}})$ iterations to output such a solution where  $\mathbf{g_{\epsilon}} = \epsilon'-\epsilon>0,\mathbf{g_{\delta}}=\delta'-\delta>0$ are gap parameters.
\end{theorem}
Our result differs from Nesterov's acceleration: we do not use previous iterates in a momentum or ODE-like fashion; rather, here we are closer to primal-dual algorithms where knowing nonzero coordinates at the dual optimal solution  
can be used to accelerate primal convergence, see \cite{diakonikolas2019approximate}. Moreover, since our approach is {\em randomized}, the dynamics can be better modeled using the SDE framework for unlearning purposes, as in \cite{simsekli2020fractional}. 
Here, we do not  compute anything extra, although it is feasible for future extensions.
\begin{remark}
Our approach to estimate the Lipschitz constant is different from \cite{fazlyab2019efficient} where an SDP must be solved -- quite infeasible for unlearning applications. Our approach can be interpreted as solving a simplified form of the SDP proposed there, when appropriate regularity conditions on the feasible set of the SDP are satisfied. 
\end{remark}

\noindent\textbf{A note on convexity.}
Existing methods for guaranteeing removal and performance depend on models being convex. Practical deep learning applications however involve highly nonconvex functions.
The intuitions of unlearning for convex problems {\bf directly apply to nonconvex unlearning} with one more technical assumption: minimizers of the learning problem satisfy Second Order Sufficiency (SOS) conditions. 
SOS guarantees that $ \nabla^2\hat{F}(\hat{w})$, $\hat{H}$ in eq (7) of [28] are PSD, and that the update (8) is an {\em ascent} direction w.r.t. the loss function on $U$, making unlearning possible. Guarantees for nonconvex unlearning involve explicitly characterizing a subset of SOS points (so-called  ``basin of attraction" of population loss), i.e., which points gradient descent can converge to, see \S1.3 in \cite{Traonmilin_2020}. 
%
So, will minimizers from first order methods satisfy SOS conditions? Generally, this is not true, e.g., when the Hessian is indefinite, $\hat{H}\not\succeq 0$, the update itself may not be an ascent direction w.r.t. negative of the loss. Here, standard Hessian modification schemes are applicable \cite{wright1999numerical}, subsequently using the Newton's step in \cite{sekhari2021remember} with a diagonally modified Hessian.

We fix weight decay during training, acting as $\ell_2$ regularization and giving us an approximate $\lambda$-strong convexity. We also take advantage of this property to smooth our Hessian prior to inversion, intuitively extending the natural linearization about a strongly-convex function. Interestingly, this exactly matches a key conclusion from \cite{basu2021influence}: weight-decay heavily affects the quality of 
the measured influence,  
consistent with our nonconvexity discussion.
\noindent{\bf Implementation Details.}
As we only need a subset of the Hessian, we compute the finite difference among the parameters within the blocks selected.
For large models, even subsets of model parameters may lead to large Hessian computations, so we move parameters as needed to the CPU for parameter updates. Pairwise distance computations for CI testing via nearest neighbor are carried out on the GPU \cite{yoso-zhanpeng}.
Our code although not explicitly optimized achieves reasonable run-time for unlearning for deep models, e.g., one unlearning step for person re-identification task on a ResNet50 model with roughly 24M parameters takes about 3 minutes.  




\section{L-FOCI in Generic ML Settings}
\begin{figure}
    \centering
    \includegraphics[width=0.98\columnwidth,trim={0, 11.6cm, 0, 0},clip]{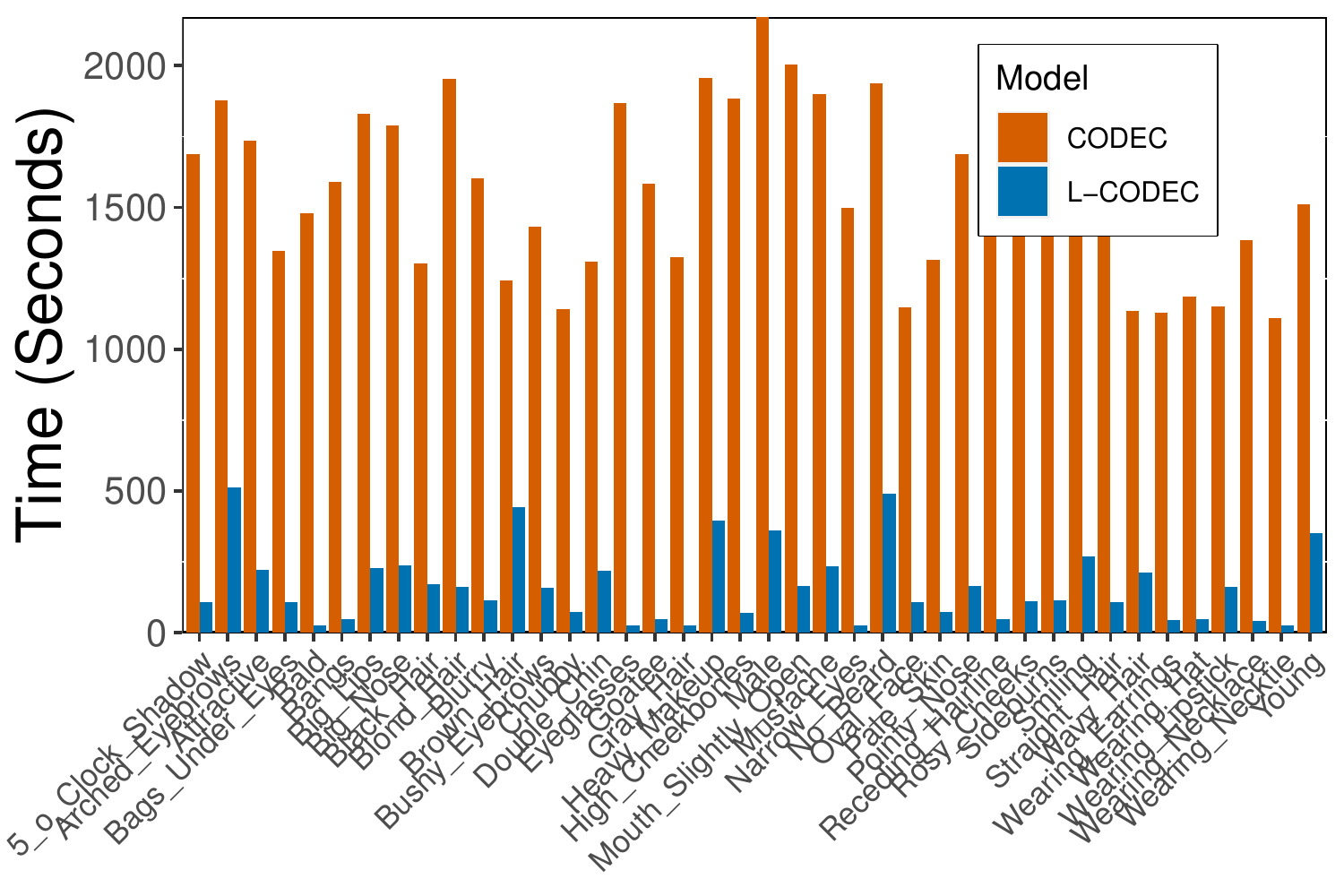}
    \vspace{-8pt}
    \caption{\label{fig:speed_hist} L-CODEC vs CODEC run time comparison for identifying sufficient subsets for each CelebA attribute separately (pairs of columns, details in supplement).}
\end{figure}
We begin with understanding the value of L-CODEC and L-FOCI for Markov Blanket Identification and progress to applications
in typical unlearning tasks involving large neural networks previously infeasible with existing scrubbing tools.
See appendix for additional details. 

\noindent\textbf{L-CODEC Evaluation.}
To assess speedup gained in the discrete setting when running L-CODEC,
we construct the Markov Blanket for specific attributes provided as side information with the CelebA dataset. Fig.~\ref{fig:speed_hist} shows the wall-clock times for Markov Blanket Selection via FOCI and L-FOCI for each attribute.

\noindent\textbf{Markov Blanket Identification.} We replicate the experimental setup in Sec 5.3 of \cite{bullseye}, where a high dimensional distribution over a ground truth graph is generated, and feature mappings are used to reduce the dimension and map to a latent space. Table~\ref{tab:bullseye} summarizes subset identification efficacy and runtime. Replacing conditional mutual information (CMI) with L-CODEC, we see a clear improvement in both runtime and Markov Blanket identification over the raw data, and comparable results in the latent feature space. Using L-FOCI directly in the feature space, we identify an additional spurious feature not part of the Markov Blanket, but runtime is significantly faster.
\begin{table}
    \centering
    \resizebox{!}{45pt}{%
    \begin{tabular}[b]{l|ccr|ccr}
        \hline\hline
        & \multicolumn{3}{c|}{Raw Data} & \multicolumn{3}{c}{Feature Maps} \\
        Method & TPR & FPR & Time (s) & TPR & FPR & Time (s) \\
        \hline
        \cite{bullseye} & 0.75 & 0.50 & 5124.22 & \textbf{0.875} & \textbf{0.00} & 516.19 \\
        L-CODEC + CIT & \textbf{1.00} & \textbf{0.50} & \textbf{402.10} & 0.75 & \textbf{0.00} & 117.29 \\
        L-CODEC + L-FOCI & \multicolumn{3}{c|}{N/A} & 0.833 & 0.50 & \textbf{0.464} \\
        \hline\hline
    \end{tabular}
    }
    \caption{\label{tab:bullseye} 3D-Bullseye Markov Blanket identification. CIT represents the model in \cite{bullseye}. Both L-CODEC and L-FOCI run much faster than recent Markov Blanket identification schemes. L-FOCI is not applicable to the multi-dimensional raw data setting.}
\end{table}

\begin{figure}[t]
    \centering
    \includegraphics[width=0.155\columnwidth,trim={0.25in 0 10.25in 0},clip]{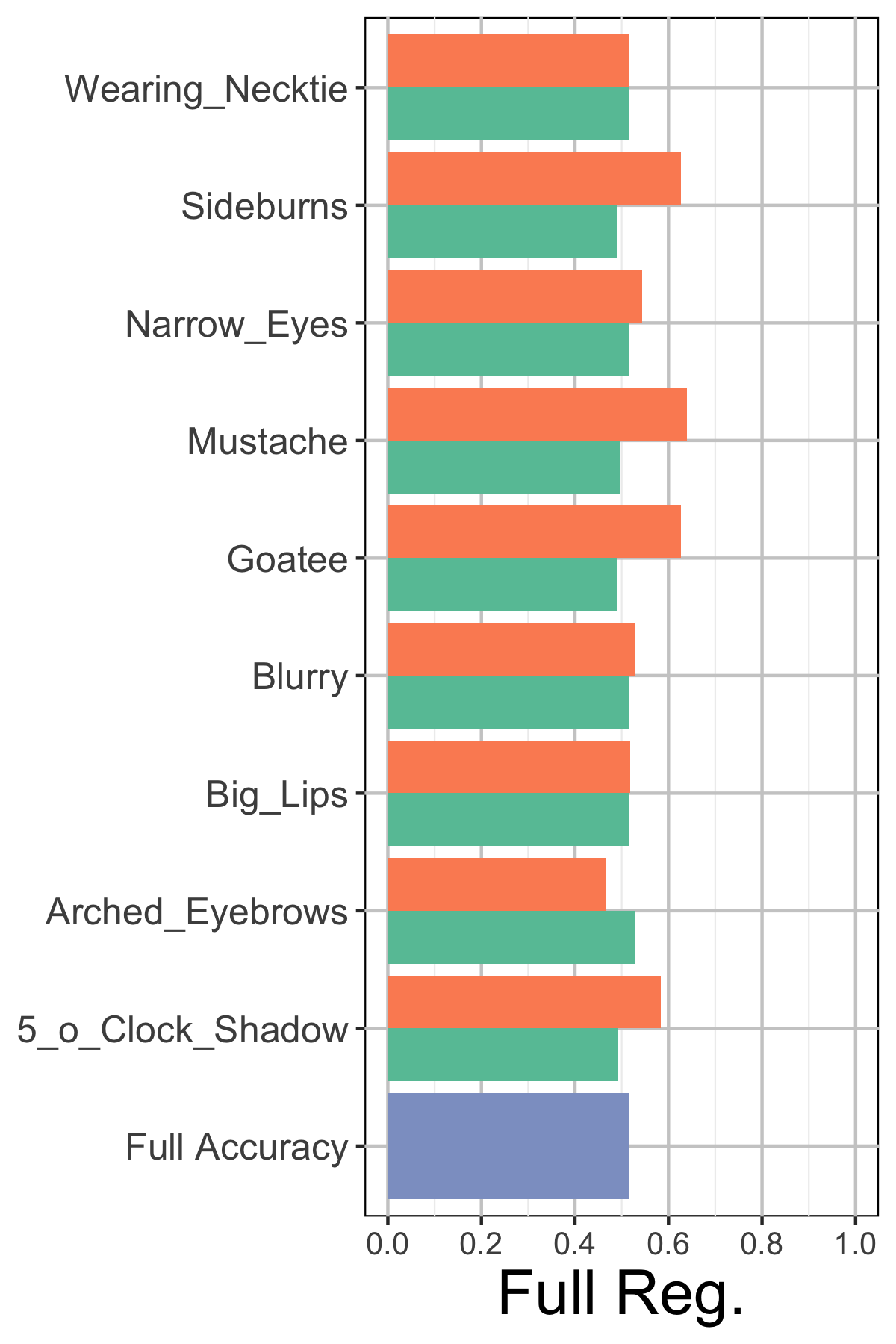}
    \includegraphics[width=0.25\columnwidth,trim={6.5in 0 0 0},clip]{figs/No_Beard_All.png}
    \includegraphics[width=0.25\columnwidth,trim={6.5in 0 0 0},clip]{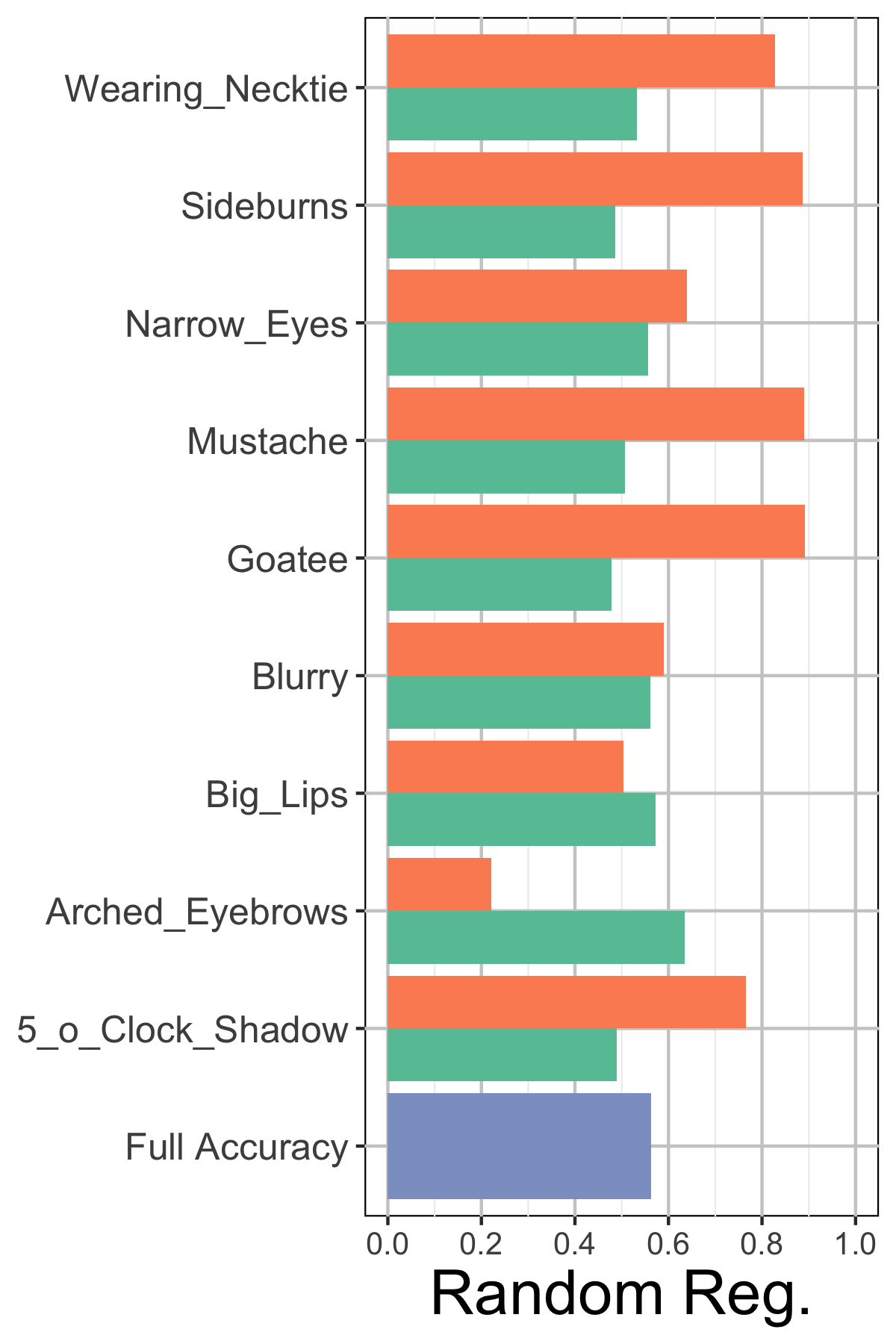}
    \includegraphics[width=0.25\columnwidth,trim={6.5in 0 0 0},clip]{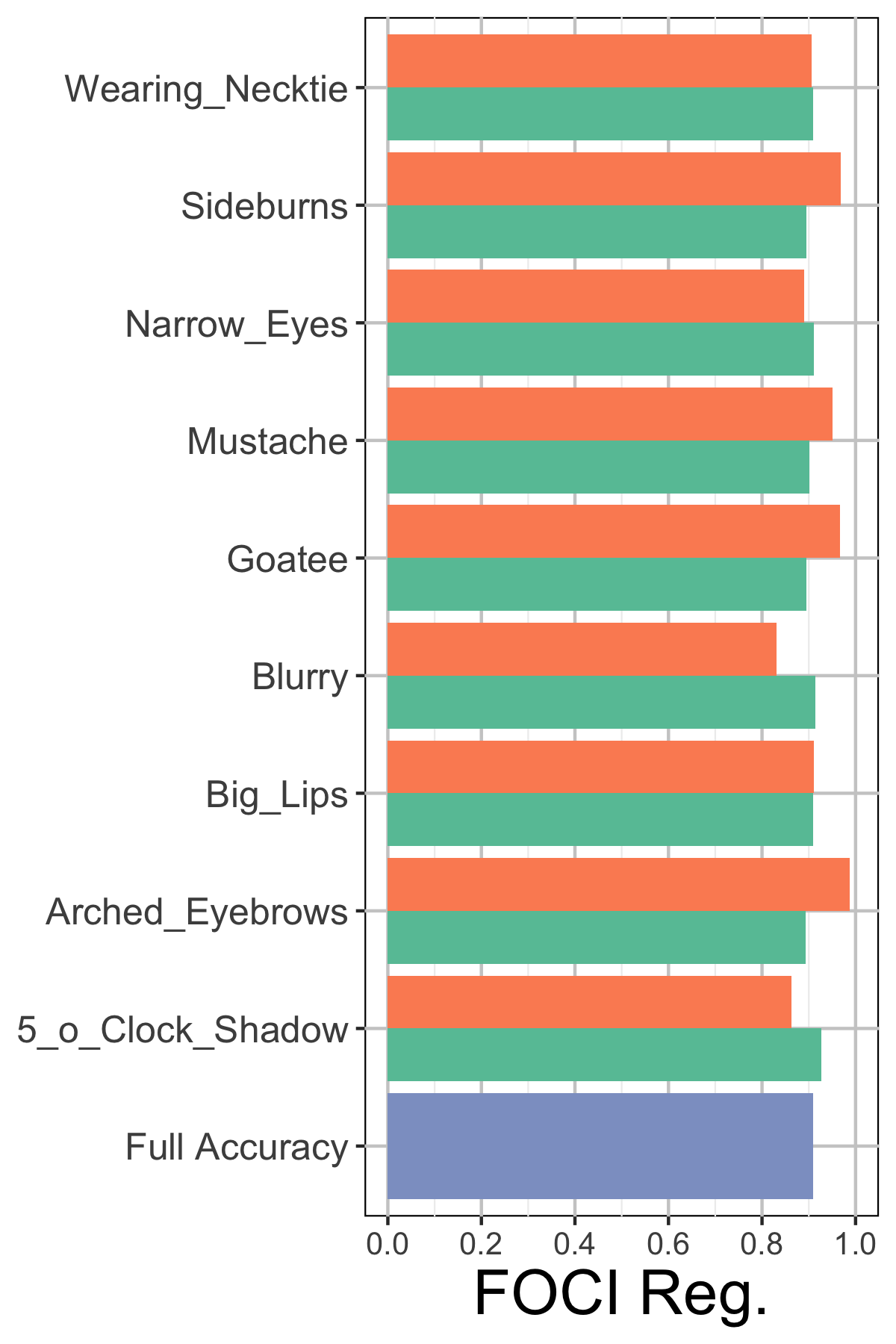}
    \vspace{-8pt}
    \caption{\label{fig:spur} Validation accuracies after training to predict ``No Beard" in the CelebA dataset. (L to R) regularization for all features, for a random subset, and via FOCI. Green indicates accuracy on the data with that feature, red, without.}
    \vspace{-15pt}
\end{figure}

\noindent\textbf{Spurious Feature Regularization.}
This Markov Blanket $(MB)$ identification scheme can be used to address spurious feature effects on traditional NN models.
A straightforward approach would be to directly add a loss term for each potentially important feature over which we would like to regularize, 
$\cL(\theta) + \sum_{S\in \cS} R_S(\theta)$.
However, with a large number of outside factors $S$, this can adversely effect training.
We instead use L-FOCI to identify the set of minimal factors that, when conditioned, make the rest conditionally independent. Then it is only necessary to include regularizers over $S \in MB(Y)$.

We evaluate a simple attribute image classification setting using the CelebA dataset. We run L-FOCI over the attributes as in our L-CODEC evaluation, and regularize using a Gradient Reversal Layer for a simple accuracy term over those attributes.
Results in Fig. \ref{fig:spur} clearly show that selection with FOCI provides the best result, maintaining high overall accuracy but also preserving high accuracy on sets of samples with/without correlated attributes.

\section{L-FOCI for Machine Unlearning}

\subsection{Compare to Full Hessian Computation}
For simple regressors, we can compute the full Hessian and compare results generated by a traditional unlearning update, our L-FOCI update, and a random selection update. To reduce variance and show the best possible random selection, we run our L-FOCI and randomly choose a set of the same size for each random selection. Fig.~\ref{fig:mnistcifar} (left) shows validation and residual accuracies for 1000 random removals from MNIST (average over 10 runs). 

\begin{figure*}[h!]
    \centering
    \includegraphics[width=0.24\textwidth]{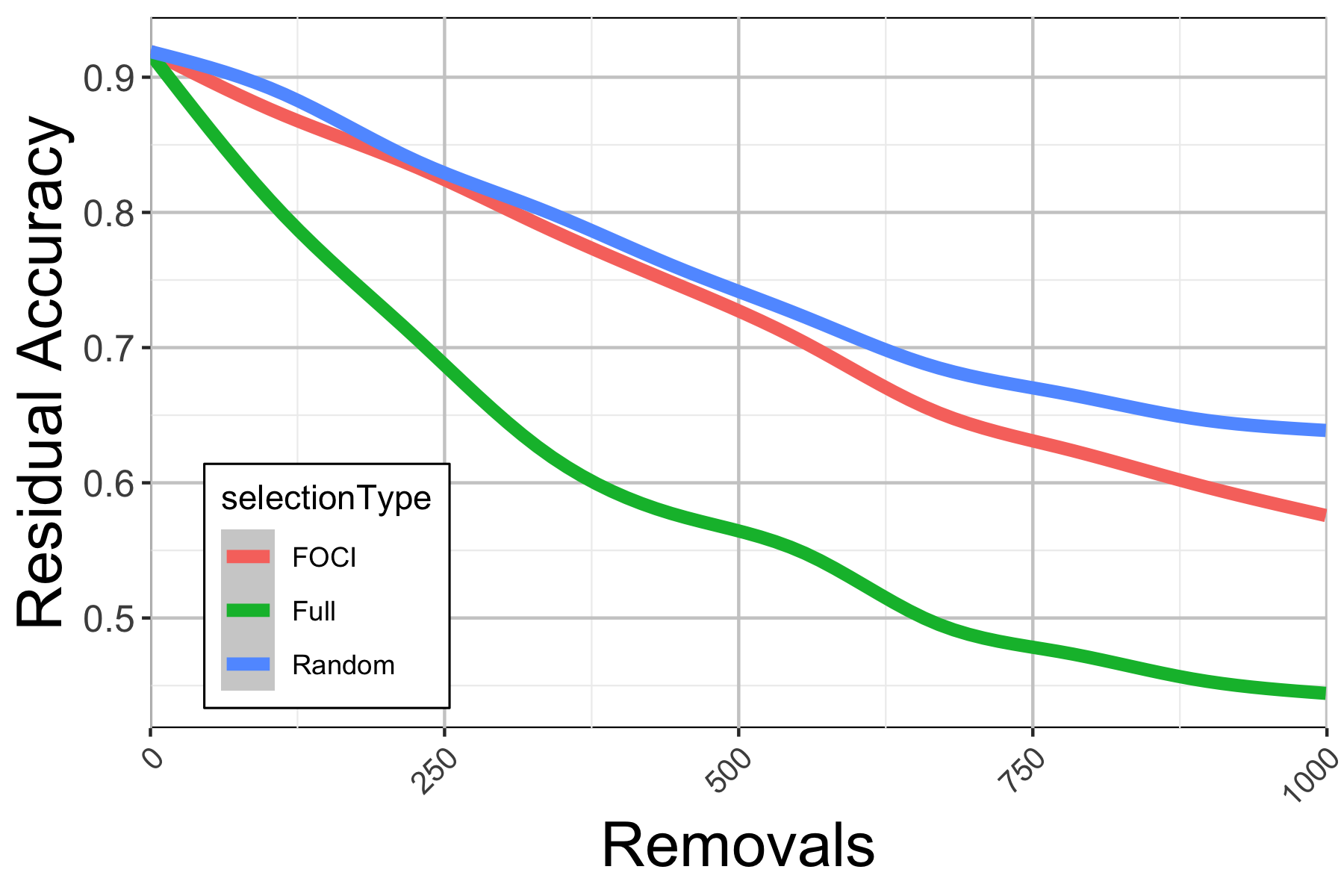}
     \includegraphics[width=0.24\textwidth]{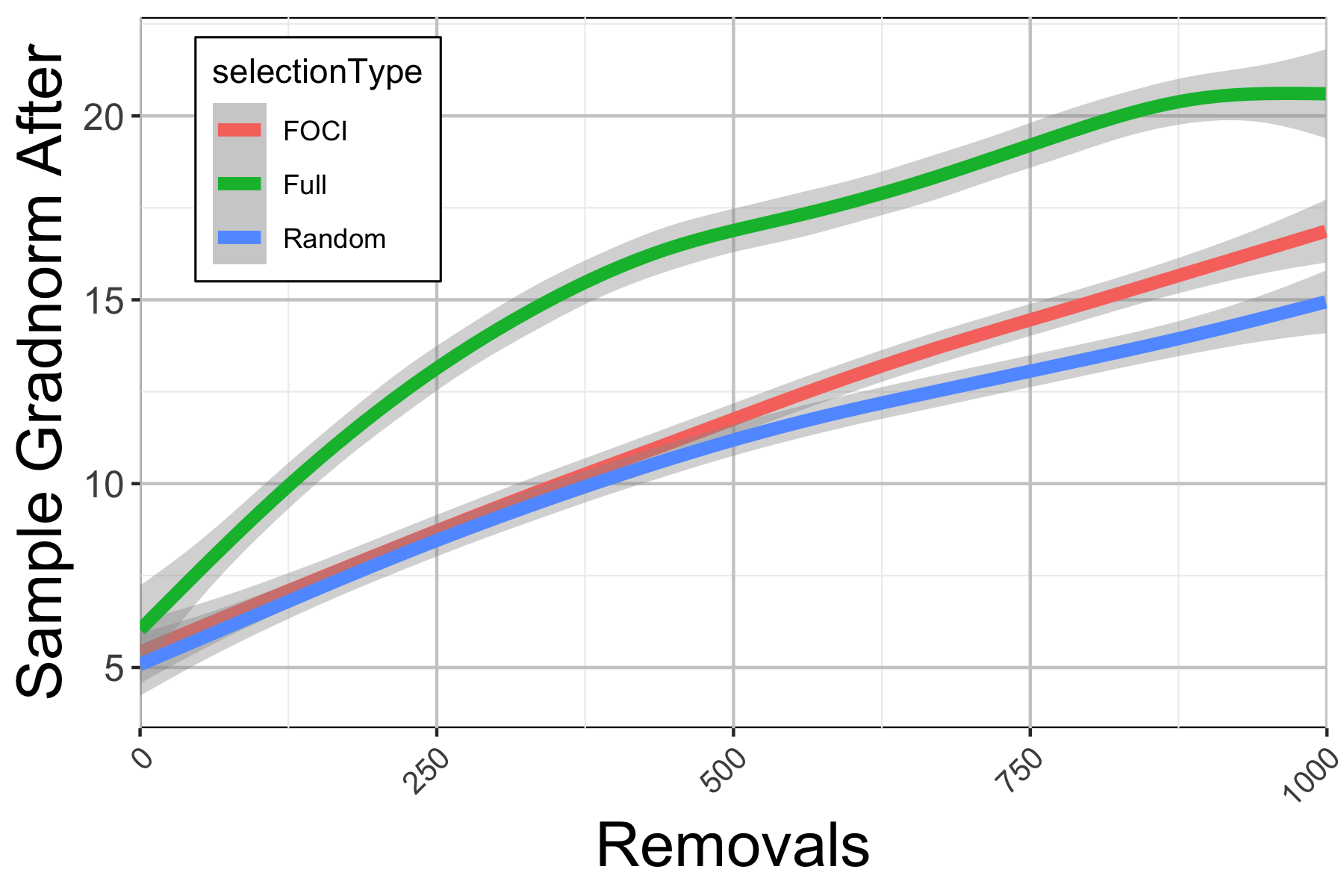}
     \vspace{-8pt}
    \includegraphics[width=0.24\textwidth]{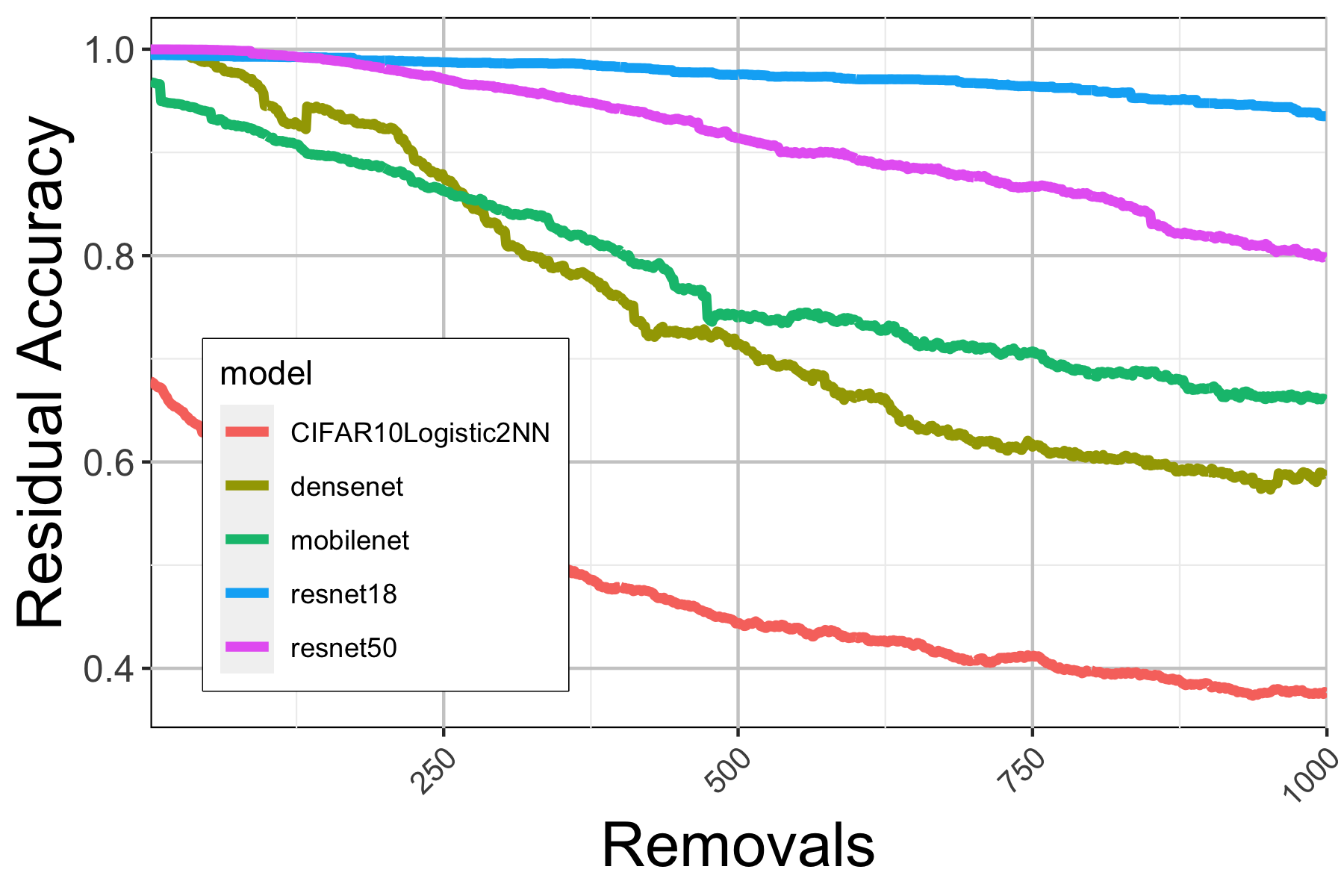}
    \includegraphics[width=0.24\textwidth]{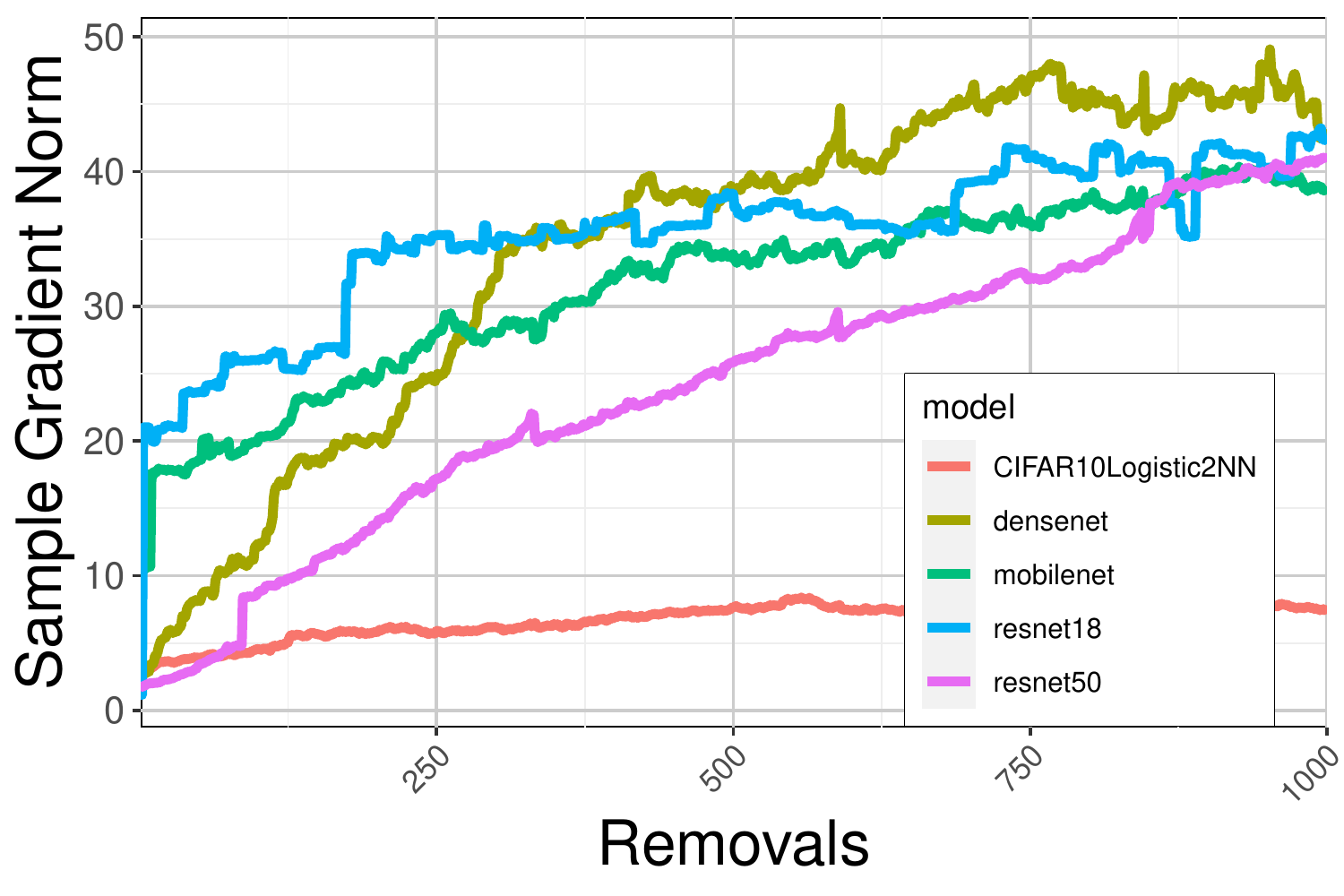}
    \caption{(Left) Residual Accuracies \& Sample Gradient Norm of removal for an MNIST Logistic Regressor. Averaged over 10 runs. (Right) Residual accuracies and sample gradient norms for various CIFAR-10 models.}
    \label{fig:mnistcifar}
\end{figure*}

\noindent\textbf{Are we selecting reasonable subsets?} A natural question is whether the subset selection via L-FOCI is any better than random, given that we are effectively taking a smaller global step. We answer this in the affirmative with a simple comparison with a random selection of size equal to the set selected by L-FOCI. Fig.~\ref{fig:mnistcifar} (left) shows that the sample gradient norm for selections made by L-FOCI are larger than those of a random selection: the subset of the model scrubbed of this specific sample has a larger impact on its final loss, and thus the gradient norm post-removal is large.



\noindent\textbf{Does the formulation scale?} We scrub random samples from various CIFAR-10 models, and evaluate  performance for the same set of hyperparameters. When the models are larger than logistic regression, it is infeasible to estimate the full Hessians, so 
we {\em must} use our L-FOCI selection update. Fig.~\ref{fig:mnistcifar} (right) shows removal performance over many typical models with varying sizes. Models that have higher base accuracies tend to support more removals before performance drops. This matches results for differentially private models: models that generalize well may not have overfit and thus may already be private, allowing ``fast'' forgetting.


\noindent\textbf{Tradeoff vs Retraining.}
While our focus is the setting in which retraining is not feasible, where we can retrain we compare validation accuracies as a function of number of removals. Using a subset of MNIST, we train to convergence and iteratively remove samples using our construction, retraining fully at each step for comparison. With 1000 training samples from each class 
and reasonable settings of privacy parameters ($\epsilon=0.1,\delta=0.01$),
we support a large percentage of removals until validation accuracy drops more than a few percent, see Fig.~\ref{fig:retrain}.
\begin{figure}
    \centering
\includegraphics[width=0.49\columnwidth]{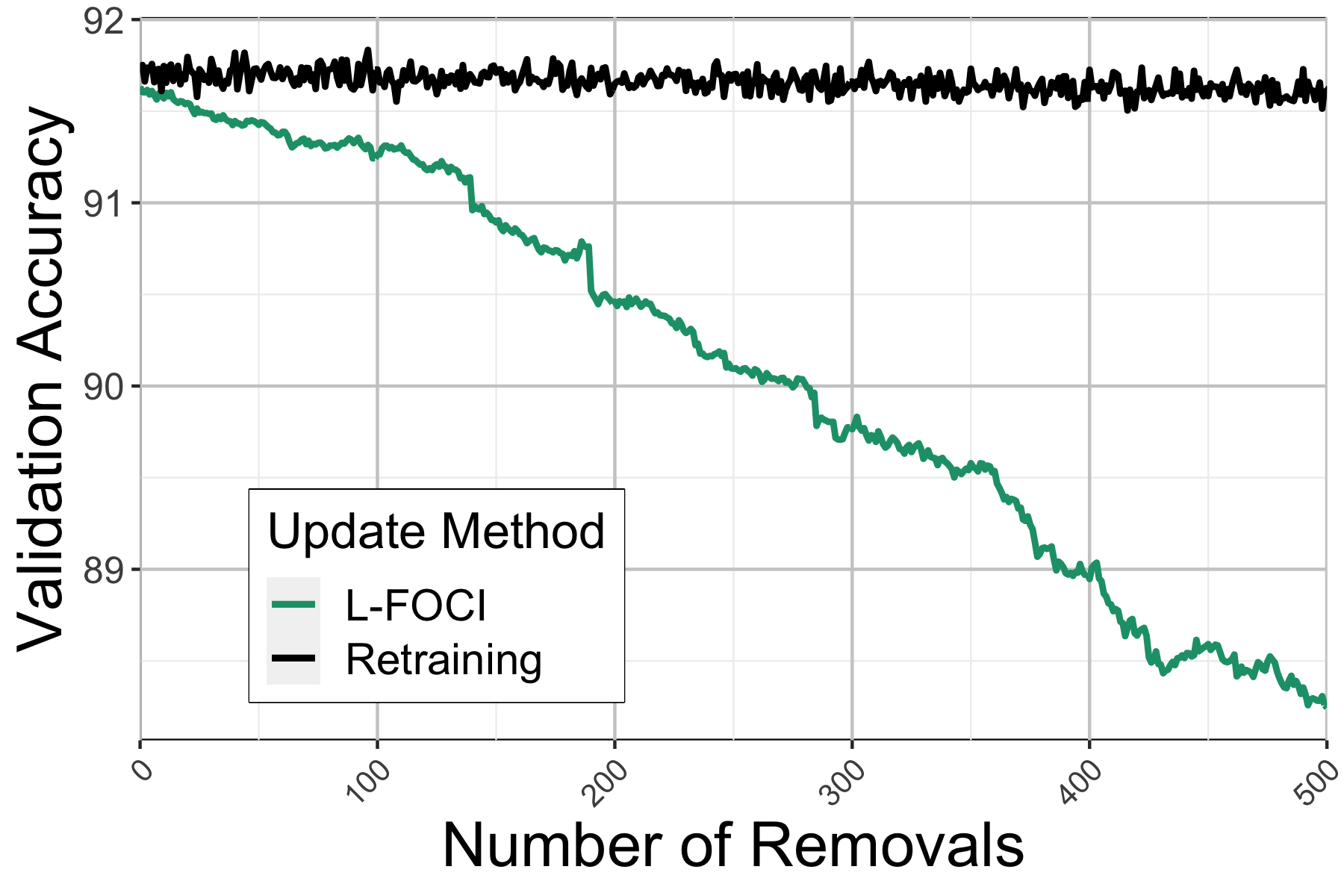}
\includegraphics[width=0.49\columnwidth]{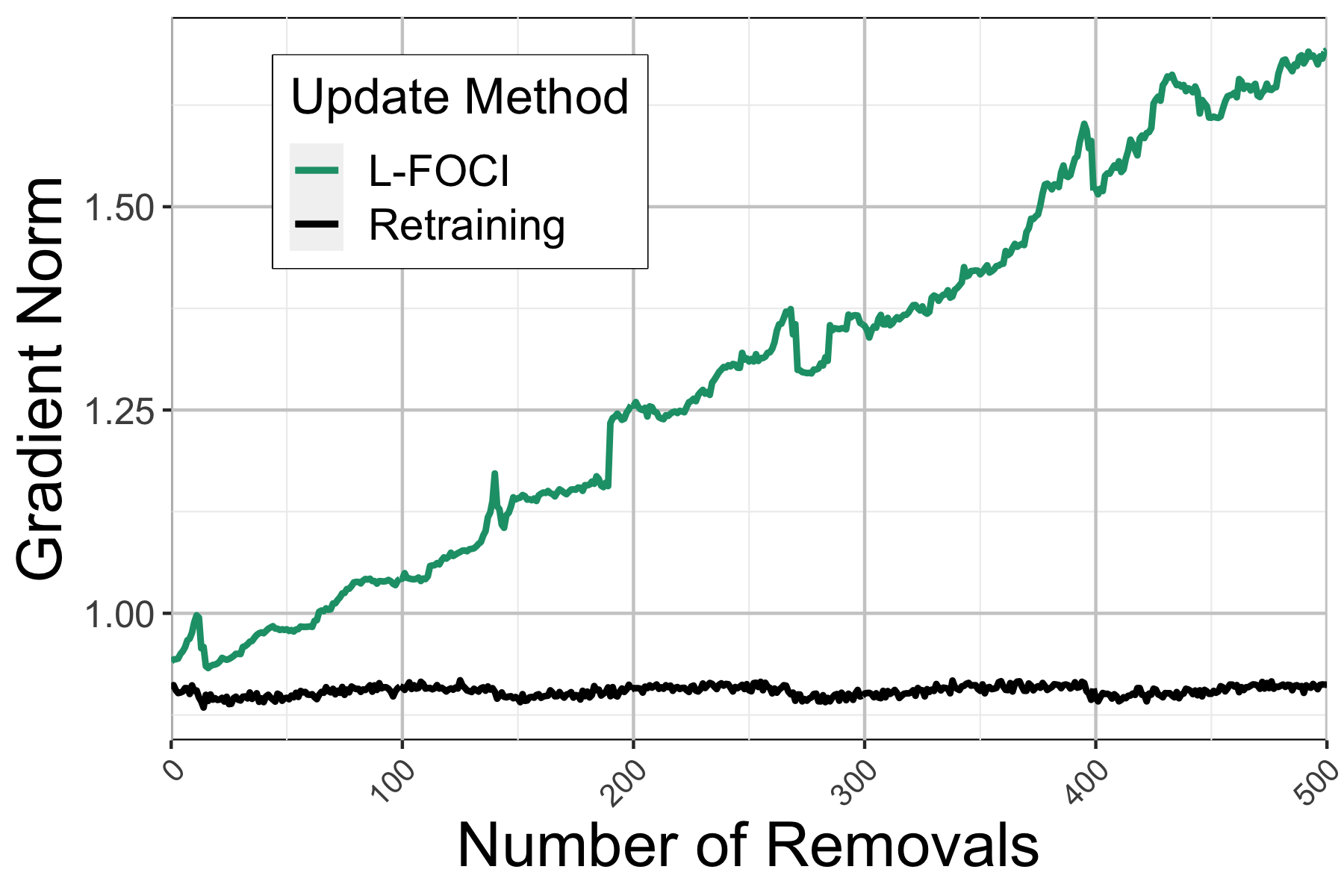}
    \caption{MNIST Retraining comparison averaged over 8 runs. Validation accuracies and residual gradient norms.}
    \label{fig:retrain}
\end{figure}

\subsection{Removal in NLP models}
We now scrub samples from transformer based models using  LEDGAR \cite{tuggener2020ledgar}, a multilabel corpus of legal provisions in contracts. We use the prototypical subset which contains $110156$ provisions pertaining to $13$ most commonly used labels based on frequency. Our model is a fine-tuned DistilBERT \cite{sanh2019distilbert} and uses the $[CLS]$ token as an input to the classification head. 
Table. \ref{tab:nlp} shows results of scrubbing the provisions from two different classes; \textit{Governing Laws} and \textit{Terminations} which have the highest/lowest support in the test set. As expected with increasing $\epsilon$, i.e., lower privacy guarantees, we can support more number of removals based on the Micro F1 score of the overall model. 
The Micro F1 scores, for the removed class fall off rapidly, while the change in overall scores is more gradual. 



\subsection{Removal from Pretrained Models}
The above settings show settings where a sample from one specific source may be removed. A more direct application of unlearning 
is completely removing samples from a specific class; a compelling use case is face recognition.

We utilize the VGGFace dataset and model, pretrained from the original work in \cite{huang2008labeled,Parkhi15}. The model uses a total of approximately 1 million images to predict the identity of 2622 celebrities in the dataset. Using a reconstructed subset of 100 images from each person, we first fine-tune the model on this subset for 5 epochs, and use the resultant models as estimates of the Hessian. 
In this setting, the VGGFace model is very large, including a linear layer of size $25088 \times 4096$. Selecting even a few slices from this layer results in a Hessian matrix unable to fit in typical memory. For this reason, we run a ``cheap" version of L-FOCI: we select only one slice that results in the largest conditional dependence on the output loss.
%

\stepcounter{table}
\begin{figure}
\begin{subfigure}{0.45\columnwidth}\centering
    \includegraphics[width=0.98\columnwidth]{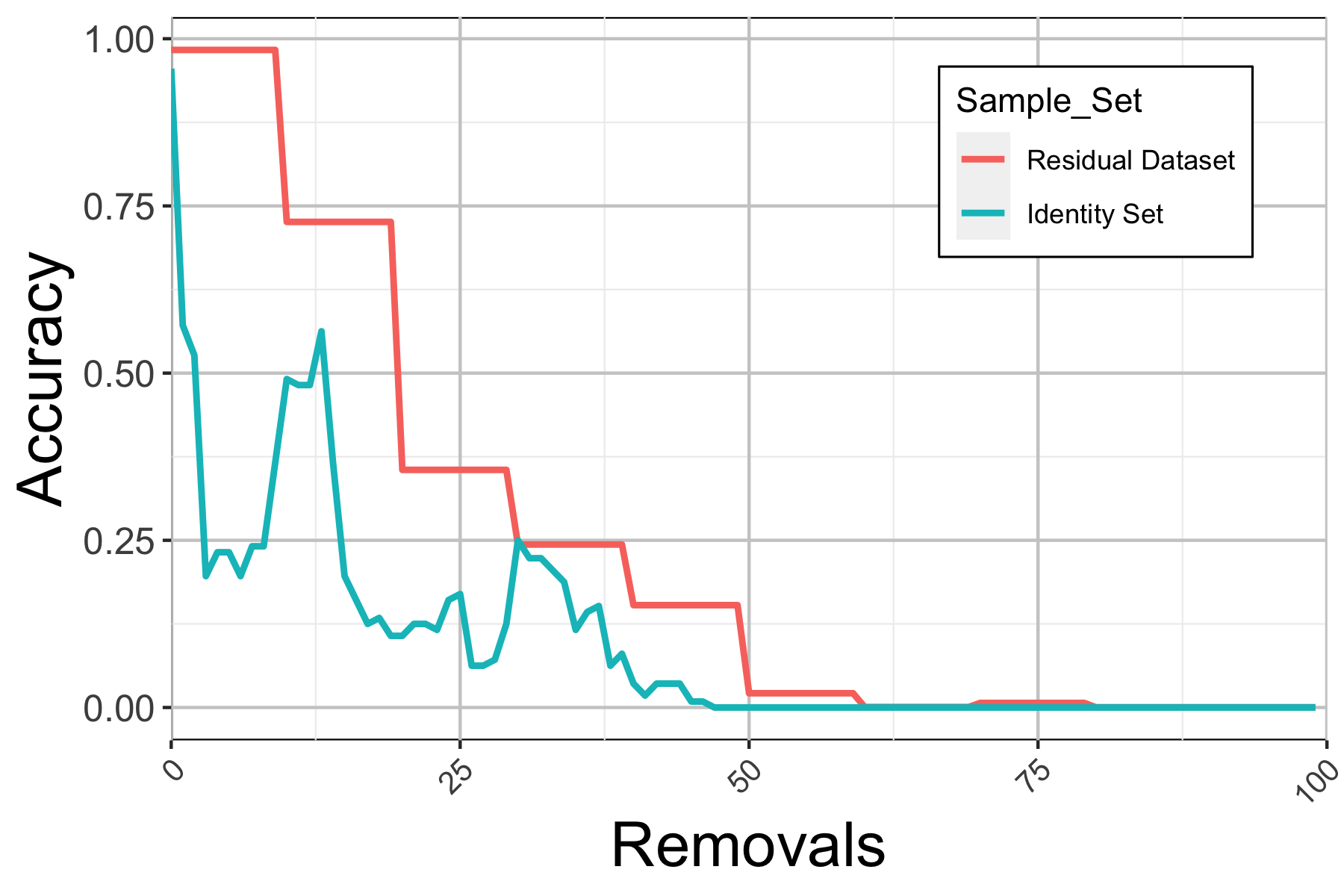}%
    \phantomsubcaption
    \label{fig:vgg}
\end{subfigure}
\begin{subfigure}{0.495\columnwidth}
    \centering
    \resizebox{!}{50pt}{%
    \begin{tabular}[b]{l|ccc}
        \hline\hline
        & \multicolumn{2}{c}{\# Supported Removals}  \\
        $\epsilon$ & Governing Laws & Terminations \\
        \hline
        0.1 & $>$ 100 & $>$ 100 \\
        0.01 & $>$ 100 & $>$ 100 \\
        0.001 & 18 & 21 \\
        0.0005 & 6 & 7 \\
        \hline\hline
    \end{tabular}%
    }
    \phantomsubcaption
    \label{tab:nlp}
\end{subfigure}
\captionsetup{labelformat=andtable}

\caption[important image]{(Left) Scrubbed and Residual Accuracies (every 10 removals) for $\epsilon = 1e^{-5}$. The accuracy drop for the residual set is gradual up to a certain number of removals. (Right) Scrubbing transformer model for provision classification.}
\label{fig:combine}
\vspace{-5pt}
\end{figure}

Fig.~\ref{fig:vgg} show results for scrubbing consecutive images from one individual in the dataset for a strong privacy guarantee of $\epsilon=10^{-5}$. As the number of samples scrubbed increases, the performance on that class drops faster than on the residual set, exactly as desired.

\subsection{Removal from Person re-identification model}
As a natural extension to our experiments on face recognition, we evaluate unlearning of deep neural networks trained for person re-identification. Here, the task is to associate the images pertaining to a particular individual but collected in diverse camera settings, both belonging to the same camera or from multiple cameras. In our experiments, we use the Market-1501 dataset \cite{zheng2015scalable} and a Resnet50 architecture which was trained for the task. We unlearn samples belonging to a particular person, one at a time, and check the performance of the model. Experimental results are in agreement with results reported for the transformer model as well as the VGGFace model. With very small values of $\epsilon$ i.e. $0.0005$ the number of supported removals is limited to less than $10$ depending on the person id being removed. However, with a larger value of $\epsilon$, e.g., $0.1$, all potential samples can be removed without a noticeable degradation in model performance in terms of mAP scores. In Fig.~\ref{fig:reid}, we clearly see that after scrubbing a model for a particular person, its predictions for that particular individual become meaningless whereas the predictions on other classes are still possible with confidence, as desired. Additional experiments with different datasets, model architectures and other ablations for deep unlearning for person re-identification models are presented in the appendix.

\begin{figure}[!tb]
    \centering
    \includegraphics[width=0.49\columnwidth]{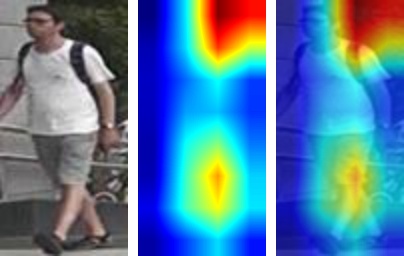}
     \includegraphics[width=0.49\columnwidth]{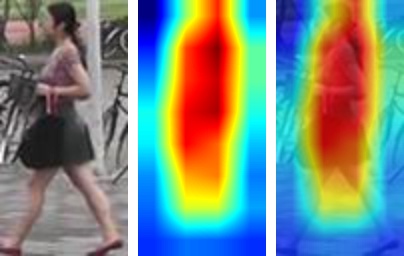}
    \caption{\label{fig:reid}Activation maps from a model scrubbed for the person on the left (right set is not scrubbed). For each triplet, from (L to R) are the original image, the activation map and its image overlay. Note the effect of scrubbing: activations change significantly for the scrubbed sample (compare column 2 to 3) whereas remain stable for the non-scrubbed sample (compare column 5 to 6).}
\end{figure}

\section{Conclusion}
Our selection scheme identifies a subset of parameters to update and significantly reduces compute requirements for standard Hessian unlearning. 
For smaller networks with a large number of removals, retraining may be effective, but when full training sets are not available or retraining is costly, unlearning in some form is needed. 
We show the ability to approximately unlearn for large models prevalent in vision, a capability that has not so far been demonstrated.  

{\bf Social Impact.} 
Indiscriminate use of personal data in training 
large AI models is ethically questionable 
and sometimes illegal. We need mechanisms to ensure that AI models operate 
within boundaries specified by society 
and legal guardrails. As opt-out laws get 
implemented, compliance on the service-provider end will entail costs. 
While our contributions cannot guarantee perfect forgetting, with additional 
validation they can become a part of a suite of methods for unlearning. 

\noindent {\bf Acknowledgments.}
This work was supported by NIH grants RF1AG059312, RF1AG062336 and RF1AG059869, NSF award CCF 1918211 and funds from the American Family Insurance Data Science Institute at UW-Madison. Sathya Ravi was supported by UIC-ICR start-up funds.


{\small
\bibliographystyle{ieee_fullname}
\bibliography{refs}
}

\clearpage
\appendix
\renewcommand{\thesection}{\Alph{section}.\arabic{section}}
\setcounter{section}{0}


\section{Theoretical Results}
\subsection{Proof of Lemma 1}
Let us take $D$ to be the training set; w.l.o.g. $z$ is the point being removed. Let the residual dataset be $D' = D \setminus {z}$.
Denote $w^-_{Full}$ as the weight parameters after doing a full Hessian update and $w^-_{Foci}$ as the weight parameters after doing a FOCI selected Hessian update.
In an ideal case, we want $(w^-_{Foci}, D')$/$(w^-_{Full}, D')$ to be as close as possible to $(w^*, D')$. Note that we consider both $(w^*, D)$ and $(w^*, D')$ to be $0$ as we don't expect model parameters to change drastically for one sample once trained to convergence. 

\begin{lemma}
The gap between the gradient residual norm of the FOCI Unlearning update in Algorithm 1 and a full unlearning update via Eq. 4 in the main paper,
\begin{align}
||\nabla \mathcal{L}(w^-_{Foci},D')||_2 - ||\nabla \mathcal{L}(w^-_{Full},D')||_2
\end{align}
shrinks as $O(1/n^2)$.
\end{lemma}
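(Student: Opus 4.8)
The plan is to bound the gap not by estimating each residual in isolation but by controlling their difference, exploiting that deleting one sample out of $n$ perturbs the stationarity condition only at order $1/n$. Write $g := \nabla\mathcal{L}(w^*, D')$ for the gradient residual at the converged parameters evaluated on the retained set, $H := \nabla^2\mathcal{L}(w^*, D')$ for the Hessian on $D'$, and let $\Delta_{Full} := w^-_{Full} - w^*$ and $\Delta_{Foci} := w^-_{Foci} - w^*$ denote the two displacements. The first fact I would establish is that $\|g\| = O(1/n)$: since $w^*$ minimizes the empirical risk on $D$ the per-sample gradients sum to zero, so $g = -\tfrac{1}{n-1}\nabla\ell(z, w^*)$, which is $O(1/n)$ under the standard boundedness assumption on the loss gradient. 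Because both updates are Newton-type steps, i.e.\ an order-one inverse-Hessian operator applied to $g$, both displacements are themselves $O(1/n)$.

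Next I would Taylor-expand each gradient residual about $w^*$,
\[
\nabla\mathcal{L}(w^-_{\bullet}, D') = g + H\Delta_{\bullet} + O(\|\Delta_{\bullet}\|^2),
\]
so that, by the reverse triangle inequality, the quantity of interest is controlled by
\[
\bigl\|\nabla\mathcal{L}(w^-_{Foci}, D') - \nabla\mathcal{L}(w^-_{Full}, D')\bigr\| = \bigl\|H(\Delta_{Foci}-\Delta_{Full})\bigr\| + O(1/n^2).
\]
For the full update, $\Delta_{Full} = -H^{-1}g$ makes the first-order term cancel $g$ exactly, leaving residual $O(\|\Delta_{Full}\|^2) = O(1/n^2)$; this is the classical single-Newton-step guarantee and confirms consistency. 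The real work is therefore to show $\|H(\Delta_{Foci}-\Delta_{Full})\| = O(1/n^2)$, equivalently $\|\Delta_{Foci}-\Delta_{Full}\| = O(1/n^2)$ since $H$ and $H^{-1}$ are $O(1)$.

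Here I would invoke the defining property of FOCI. FOCI selects the coordinate block $S$ on which the removed point $z$ carries essentially all of its gradient signal while rendering the complementary coordinates conditionally independent of $z$. Algebraically I would make two consequences precise: (i) the components of $g$ outside $S$ are $O(1/n^2)$ rather than $O(1/n)$, because $\nabla\ell(z,w^*)$ is, to leading order, supported on $S$; and (ii) the cross-block Hessian coupling $H_{\bar S S}$ acting on the selected correction is negligible, so that restricting the step to $S$ and inverting only $H_{SS}$ reproduces $\Delta_{Full}$ up to $O(1/n^2)$. Combining (i) and (ii), the selected-block displacement $\Delta_{Foci} = -P_S H_{SS}^{-1} P_S^{\top} g$ differs from $\Delta_{Full} = -H^{-1}g$ only through terms that pair an $O(1/n)$ gradient with an $O(1/n)$ FOCI selection error, giving $\Delta_{Foci}-\Delta_{Full} = O(1/n^2)$ and hence a gap of order $1/n^2$.

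The step I expect to be the genuine obstacle is (ii): translating the statistical conditional-independence guarantee that FOCI provides into the algebraic near-block-diagonality of $H$ (or near-orthogonality of the omitted gradient directions) at the precise $O(1/n)$ rate required. Making this quantitative will likely need a Lipschitz-Hessian smoothness assumption together with an explicit bound on the screening error of the FOCI selection, and one must check that the omitted cross terms do not reintroduce a first-order $O(1/n)$ contribution. Once that coupling is controlled at rate $1/n$, the second factor of $1/n$ comes for free from $\|g\| = O(1/n)$, and it is exactly this product that yields the quadratic rate.
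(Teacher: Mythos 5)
Your reduction is sound as far as it goes: writing $g=\nabla\mathcal{L}(w^*,D')=-\tfrac{1}{n-1}\nabla\ell(z,w^*)=O(1/n)$, Taylor-expanding both residuals about $w^*$, noting that $\Delta_{Full}=-H^{-1}g$ cancels the first-order term (the classical one-Newton-step guarantee), and reducing everything to bounding $\|\Delta_{Foci}-\Delta_{Full}\|$ is a legitimate and genuinely different framework from the paper's. But the proposal contains a real gap, and it is exactly the one you flag yourself as "the genuine obstacle": claims (i) and (ii) are not provable from anything the selection procedure guarantees. FOCI/L-CODEC is a statistical screening step (conditional-dependence estimation on perturbation-induced activation and loss changes); it certifies a sufficient set in a conditional-independence sense, but it comes with no quantitative rate stating that the gradient components off the selected block are $O(1/n^2)$ rather than $O(1/n)$, nor that the cross-block Hessian coupling $H_{\bar{S}S}$ contributes only at second order. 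Without those two facts, your decomposition only yields $\|\Delta_{Foci}-\Delta_{Full}\|=O(1/n)$, hence a gap of order $1/n$, and the claimed $O(1/n^2)$ rate does not follow. So as written this is a correct reduction wrapped around an unproven core, and the core is where all the content of the lemma lives.

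For comparison, the paper never compares $\Delta_{Foci}$ and $\Delta_{Full}$ in parameter space at all. It defines the residual norm as a sum of per-layer norms, splits it over the selected set $T$ and its complement $R$ (whose weights stay at $w^*$), and then argues through the compositional structure of the network: a weight perturbation $\delta_w=O(1/n)$ (read off from the Sekhari et al.\ update) in a selected layer perturbs downstream activations only at second order $O(\delta_w^2)=O(1/n^2)$, because the first-order term in the Taylor expansion of the next layer is asserted to vanish under the convergence assumption; the gap is then attributed entirely to $R$. This forward-propagation argument sidesteps your obstacle (ii) by replacing block-Hessian coupling with activation propagation, although the paper's own proof is itself informal at the analogous points (the vanishing first-order term is asserted rather than derived, and the terms on $T$ are implicitly assumed to match between the two updates). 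To complete your version you would need explicit hypotheses quantifying the FOCI screening error at rate $O(1/n)$; alternatively, adopting the paper's layer-wise decomposition trades that hypothesis for the (also strong) assumption that first-order propagation vanishes at convergence.
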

\begin{proof}
Let $w$ to be a network of many linear layers with possible activation functions; we can think of the norm as the sum of norm of gradients for each layer. Hence, for any model parameters $w$ and dataset $D$, we have:
\begin{align}
    \label{eq:res_norm}
    ||\nabla \mathcal{L}(w,D)||_2 \coloneqq \sum_{l \in L} ||\nabla \mathcal{L}(w_l, D) ||_2
\end{align}

FOCI identifies a subset $T \subset L$ slices or layers that are to be updated. 
Let $R = L \setminus T$ be the remainder of the network which is not updated.
Hence, \ref{eq:res_norm} for $(w^-_{Foci}, D')$ can be written as:
\begin{align}
    \label{eq:foci_res_norm}
    & ||\nabla \mathcal{L}(w^-_{Foci},D')||_2 \coloneqq \sum_{l \in L} ||\nabla \mathcal{L}(w^-_{Foci_l}, D') ||_2 \\
    & = \sum_{l \in T} ||\nabla \mathcal{L}(w^-_{Foci_l}, D') ||_2 + 
     \sum_{l \in R} ||\nabla \mathcal{L}(w^-_{Foci_l}, D') ||_2 \\
    & = \sum_{l \in T} ||\nabla \mathcal{L}(w^-_{Foci_l}, D') ||_2 + 
    \sum_{l \in R} ||\nabla \mathcal{L}(w^*_{l}, D') ||_2
\end{align}
The last line follows from the fact that layers in $R$ are not updated.

We will next show how for the remainder of the dataset $D'$, the changes in $T$ propagate minimally when there are a large number of data points, $n$ in the training set.

W.L.O.G. assume that we have a $3$ layer network with the form:
\begin{align}
    (L_3(L_2(L_1(x)))
\end{align}
For the point being removed $z := (x,y)$; let $L_2$ be the intermediate layer which is selected for update by FOCI.
Before the update, activations out of $L_2$ are of the form $a_2 = L_2(L_1(x)) = L_2(a_1)$.
After the update, activations out of $L_2$ can be written as:
\begin{align}
    a_2' = L_2'(L_1(x)) &= L_2'(a_1) \\
    & = w_2' a_1 \\
    & = (w_2 + \delta_{w_2}) a_1 \\
    & = w_2 a_1 + \delta_{w_2} a_1 \\
    & = a_2 + \delta_{w_2} a_1
\end{align}
The Second line follows because $L_1$ isn't updated.
For the following layer $L_3$, we have $a_3 = L_3(a_2)$ before the update. After,
\begin{align}
    a_3' &= L_3(a_2') \\
    &= L_3(a_2 + \delta_{w_2} a_1) \\
    &= L_3(a_2) + \nabla L_3 (a_2) \delta_{w_2} a_1 + \mathcal{O}((\delta_{w_2} a_1)^2) \\
    &= L_3(a_2) + 0 + \mathcal{O}((\delta_{w_2} a_1)^2) 
\end{align}
The first-order term goes to zero, as $L_3$ has not been updated and we assume full model convergence.

For the \cite{sekhari2021remember} update.
\begin{align}
    \delta_{w_2} = \frac{1}{(n-1)}(\hat{H}^{-1})\sum_{z \in \{(x_k, y_k)\}} \nabla f(\hat{w}, z)
\end{align}
Hence, $\delta_{w_2}^2 \propto \frac{1}{n^2}$. Therefore, for large values of $n$, the third term in the equation above approaches $0$. So, $a_3' = L_3(a_2)$. This shows that propagation is minimal. Similar arguments regarding null space for over-parameterized deep networks have been mentioned in \cite{golatkar2020forgetting}. 

Now, looking back at the residual gradient norm, we have:
\begin{align}
    \label{eq:foci_res_norm_simplified}
    ||\nabla \mathcal{L}(w^-_{Foci},D')||_2 &= \sum_{l \in T} ||\nabla \mathcal{L}(w^-_{Foci_l}, D') ||_2 + \\
    & \sum_{l \in R} ||\nabla \mathcal{L}(w^*_{l}, D') ||_2
\end{align}
Based on the above argument of minimal propagation, the second term above goes to $0$ for layers/slices in $R$.
Therefore,
\begin{align}
    \label{eq:foci_res}
    ||\nabla \mathcal{L}(w^-_{Foci},D')||_2 = \sum_{l \in T} ||\nabla \mathcal{L}(w^-_{l}, D') ||_2 
\end{align}
and as such the gap between this and the full update is only the difference on the set $R$, shrinking as $O(1/n^2)$.
\end{proof}

\subsection{Proof of Theorem 1}
\begin{theorem}
Assume that layer-wise sampling probabilities are nonzero. Given (user specified) unlearning parameters $\epsilon,\delta$, the unlearning procedure in Algorithm 1 in the main paper is $(\epsilon',\delta')-$forgetting where $\epsilon'>\epsilon,\delta'>\delta$ represent an arbitrary  precision (hyperparameter) required for unlearning. Moreover, iteratively applying our algorithm converges exponentially fast (in expectation) with respect to the precision gap, that is, takes (at most) $O(\log\frac{1}{\mathbf{g_{\epsilon}}}\log\frac{1}{\mathbf{g_{\delta}}})$ iterations to output such a solution where  $\mathbf{g_{\epsilon}} = \epsilon'-\epsilon>0,\mathbf{g_{\delta}}=\delta'-\delta>0$ are gap parameters.
\end{theorem}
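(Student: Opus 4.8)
The plan is to split the statement into its two independent assertions and prove them in sequence: (i) that a single application of Algorithm 1 is $(\epsilon',\delta')$-forgetting, and (ii) that iterating the procedure closes the precision gap at a geometric (hence logarithmic-in-iterations) rate. For (i) I would start from the guarantee of \cite{sekhari2021remember}, which certifies that the \emph{full} Hessian update of Eq. 4 is $(\epsilon,\delta)$-forgetting via a Gaussian mechanism whose noise scale is calibrated to the sensitivity of the Newton step. The bridge to FOCI is Lemma 1: the FOCI output differs from the full-update output only through the gradient residual norm on the untouched set $R$, and that discrepancy is $O(1/n^2)$. I would therefore treat the FOCI output distribution as a controlled perturbation of the full-update distribution and absorb the $O(1/n^2)$ mismatch into the privacy budget, yielding relaxed parameters $\epsilon'>\epsilon$ and $\delta'>\delta$. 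The hypothesis that every layer-wise sampling probability is nonzero enters precisely here: it guarantees that the support of the FOCI output covers that of the full update (no layer is deterministically frozen out), which is exactly the condition needed for a finite multiplicative bound $e^{\epsilon'}$ to exist rather than blow up.

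For (ii) I would model one iteration as a randomized operator acting on a potential $\Phi_k$ that upper-bounds the current precision gap (split into an $\epsilon$-relevant and a $\delta$-relevant component), where the randomness is the layer sampling. The goal is to show a contraction in expectation, $\mathbb{E}[\Phi_{k+1}\mid\Phi_k]\le\rho\,\Phi_k$ with $\rho<1$, where $\rho$ is governed by the minimum sampling probability (so nonzero probabilities are again what makes $\rho$ strictly below $1$). Unrolling gives $\mathbb{E}[\Phi_k]\le\rho^{k}\Phi_0$, so driving the $\epsilon$-gap to $g_\epsilon$ costs $O(\log\frac{1}{g_\epsilon})$ iterations and the $\delta$-gap to $g_\delta$ costs $O(\log\frac{1}{g_\delta})$. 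The product form $O(\log\frac{1}{g_\epsilon}\log\frac{1}{g_\delta})$ would then be obtained by showing the two requirements are coupled rather than independent: the $\delta$-component must be re-closed at each level of the $\epsilon$-contraction, producing a nested (outer $\epsilon$, inner $\delta$) iteration budget whose total is the product of the two logarithmic factors.

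The main obstacle I anticipate is making the transfer from Lemma 1 to the forgetting parameters fully rigorous: Lemma 1 controls a \emph{gradient residual norm}, whereas $(\epsilon',\delta')$-forgetting is a statement about output \emph{distributions}. Closing this requires converting the residual-norm gap into a bound on a statistical divergence — most naturally by pushing the parameter perturbation through the sensitivity-to-noise calibration of the underlying Gaussian mechanism, or via a Lipschitz argument relating the weight perturbation to a KL/R\'enyi divergence between the two output laws. A secondary delicate point is establishing the contraction factor $\rho<1$ \emph{uniformly} over layers from the nonzero-sampling assumption and confirming that the $\epsilon$- and $\delta$-processes genuinely \emph{multiply} rather than merely add their iteration counts; I would pin this down by exhibiting the explicit outer/inner loop structure of Algorithm 1 or, failing that, a concentration argument on the number of successful per-layer updates that forces the nested dependence.
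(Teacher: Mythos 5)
Your proposal diverges from the paper's actual argument, and the places where it diverges are exactly the places where it has real gaps. The paper does not build a bespoke contraction or route the privacy claim through Lemma 1 at all. Instead, it (a) recognizes the Algorithm 1 update as an instance of Randomized Block Coordinate Descent --- specifically a SEGA-type sketched/projected Newton step with inexact inverse Hessian (citing Gorbunov et al.\ and Loizou et al.) --- and imports their geometric convergence rate (their Corollary A.8, with their noise parameter $\sigma$ playing the role of the gap $\epsilon'-\epsilon$); and (b) converts the quality of the returned iterate into a probabilistic $(\epsilon',\delta')$-forgetting statement by applying Lemma 10 of Sekhari et al.\ to that iterate. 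The nonzero-sampling hypothesis is used there for convergence of the block method (every block is hit with positive probability, guaranteed because the CODEC scores are perturbed by Gaussian noise with full support), not, as in your part (i), to ensure support overlap of output distributions.

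Two concrete gaps in your version. First, your contraction $\mathbb{E}[\Phi_{k+1}\mid\Phi_k]\le\rho\,\Phi_k$ with $\rho<1$ is asserted for a generic smooth loss, but a randomized block Newton step does not contract globally in that generality; the paper's argument works because it observes that exact $(\epsilon,\delta)$-unlearning here is equivalent to a \emph{linear least-squares} problem with a fixed Hessian, i.e.\ a quadratic, which is precisely the setting in which the cited RBCD/SEGA rates (governed by the condition number of that fixed Hessian) apply. Without identifying that structure, your $\rho<1$ is unsupported. Second, your bridge for part (i) --- absorbing Lemma 1's $O(1/n^2)$ gradient-residual gap into the privacy budget --- is the step you yourself flag as the main obstacle, and it remains open in your sketch: converting a parameter-space perturbation into a divergence bound between output laws is nontrivial. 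The paper sidesteps this entirely: Sekhari et al.'s Lemma 10 is exactly the packaged tool that turns a bound on the returned solution (plus the calibrated Gaussian noise already in the mechanism) into an $(\epsilon',\delta')$ guarantee, so the correct move is to invoke it on the final iterate rather than to re-derive a sensitivity argument from Lemma 1. On the product form $O(\log\frac{1}{\mathbf{g_{\epsilon}}}\log\frac{1}{\mathbf{g_{\delta}}})$, your nested outer/inner loop construction is speculative and not how the paper proceeds (it reads the rate off the cited corollary), though in fairness the paper itself leaves the product structure largely implicit.
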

\begin{proof}
Our proof strategy is to show that our update step in Algorithm 1 is a specific form of Randomized Block Coordinate Descent (R-BCD) method. Then, we simply apply existing convergence rates of RBCD for general smooth minimization problems.  In particular, our method can be seen as an extension of SEGA method in Corallary A.7. \cite{gorbunov2020unified} where the descent direction is provided by using inexact inverse hessian metric \cite{loizou2020convergence}.  The key difference in our setup is that the sampling probabilities are computed using the CODEC procedure instead of the random sampling at each step. We make the following three observation in our setup  that immediately asserts correctness of the procedure. 

First, by our construction in equation (11) in the main paper, the sampling probabilities have full support. That is, the probability of selecting a particular weight in the neural network is strictly positive since $\xi\sim\mathcal{N}(0,\sigma^2), \sigma>0$ is a continuous distribution which has unbounded support. Second, the overall rate of speed of convergence depends on the condition number of the (fixed) Hessian at the optimal solution since exact $(\epsilon,\delta)$ unlearning is equivalent to linear least squares problem. Third, our update step is equivalent to a projected (or sketched) primal step, see equation 13 in (ArXiv Version \cite{loizou2019convergenceArxiv}). From these observations, we can see that our overall method is equivalent to SEGA in \cite{gorbunov2020unified} or its noisy extension since we use only a small set of samples (to be unlearned) at each iteration. Consequently, we obtain the deterministic geometric rate of convergence (in expectation)  by applying Corollary A.8. where $\sigma$ in their paper corresponds to the $\epsilon'-\epsilon>0$ gap in our setup. Now, to get the probabilistic $\epsilon',\delta'$ unlearning guarantee for the solution presented by our algorithm, we use Lemma 10 in \cite{sekhari2021remember} on the solution returned, completing our proof.
\end{proof}
\section{Experimental Details}
Experiments were conducted using PyTorch 1.8 and CUDA Toolkit 10.2, run on Nvidia 2080 TIs and individual Nvidia A100s. Parallelization only occurred across runs; the attached code can be run with any CUDA/PyTorch setup with the appropriate dependencies.

\subsection{Markov Blanket Selection}
Experimental settings were taken from \cite{bullseye}, with code adapted from  \url{https://github.com/syanga/model-augmented-mutual-information}. 5000 samples were used for generating the data, and 100 trials/perumutations were conducted for the CIT testing framework. 

\subsection{L-CODEC vs CODEC Speed Results}
The full figure can be seen below, at Figure~\ref{fig:full_speed_hist}. Each pair of columns corresponds to a different attribute in the CelebA dataset as the focal variable of interest, with the rest as potential conditioning variables.
\begin{figure}[h]
    \centering
    \includegraphics[width=0.7\columnwidth]{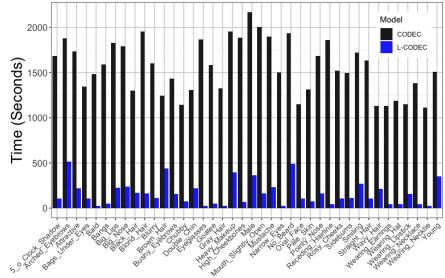}
    \caption{\label{fig:full_speed_hist} L-CODEC vs CODEC run time comparison for identifying sufficient subsets for each CelebA attribute separately (pairs of columns, details in supplement).}
\end{figure}

\subsection{MNIST Toy Results}
Training for MNIST Logistic Regressor models was run using SGD with a learning rate of 0.1, batch size of 256, and weight decay of $0.01$ for 50 epochs. 1000 perturbations were used for distribution approximation. Privacy parameters were set to $\epsilon=0.1, \delta=0.01$.
Figures and numbers in the main paper were averaged over 10 replications, for a random choice of 1000 samples to scrub.

\subsection{Retraining Comparisons}
\subsubsection{MNIST: Affects of $l_2$ Regularization and Weight Decay}
Repeating the retraining comparison in the main paper with a larger regularization, we see that the effects of removal are significantly diminished and the model can support a larger number of removals before large performance drops.
\begin{figure}
    \centering
    \includegraphics[width=0.24\columnwidth]{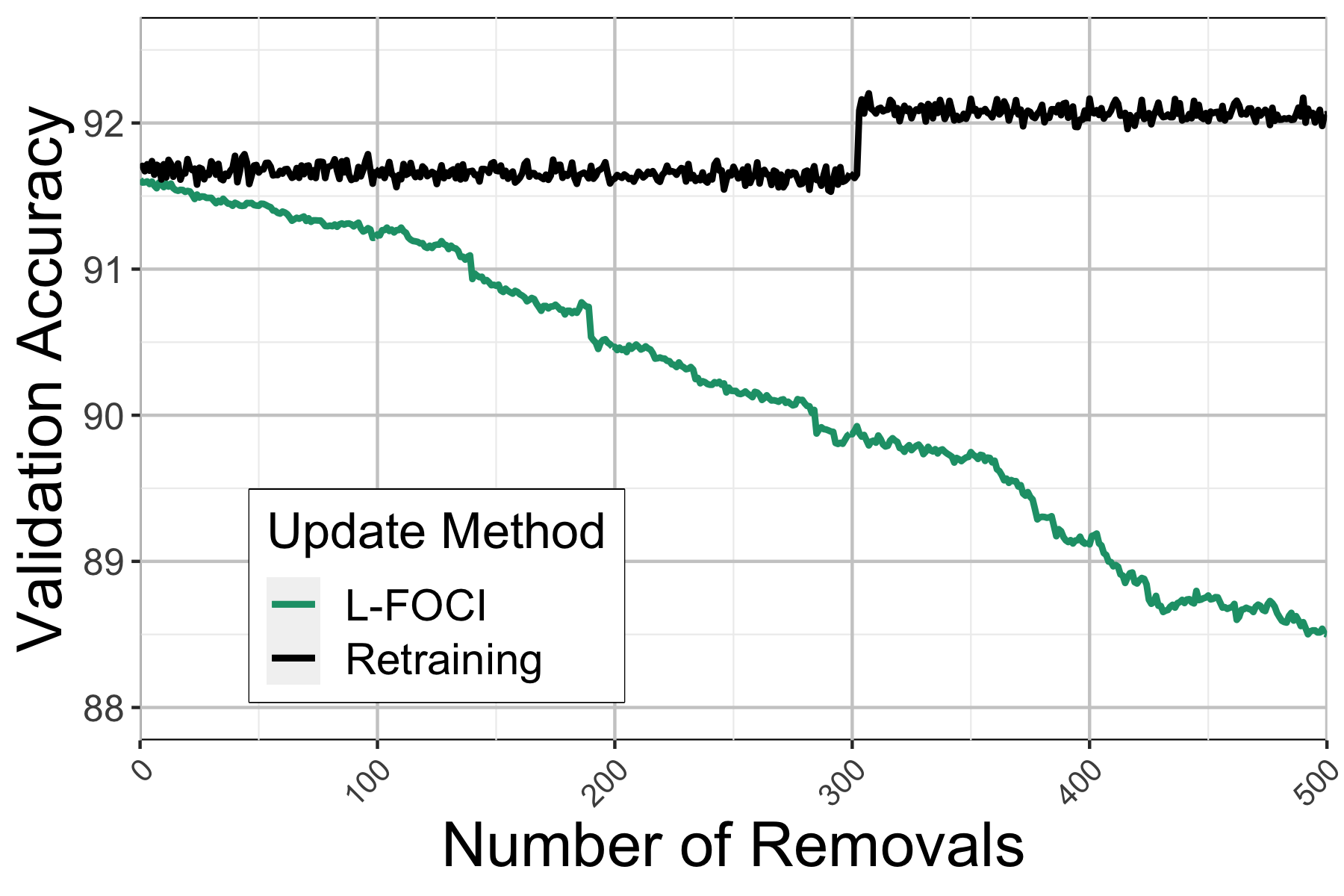}
    \includegraphics[width=0.24\columnwidth]{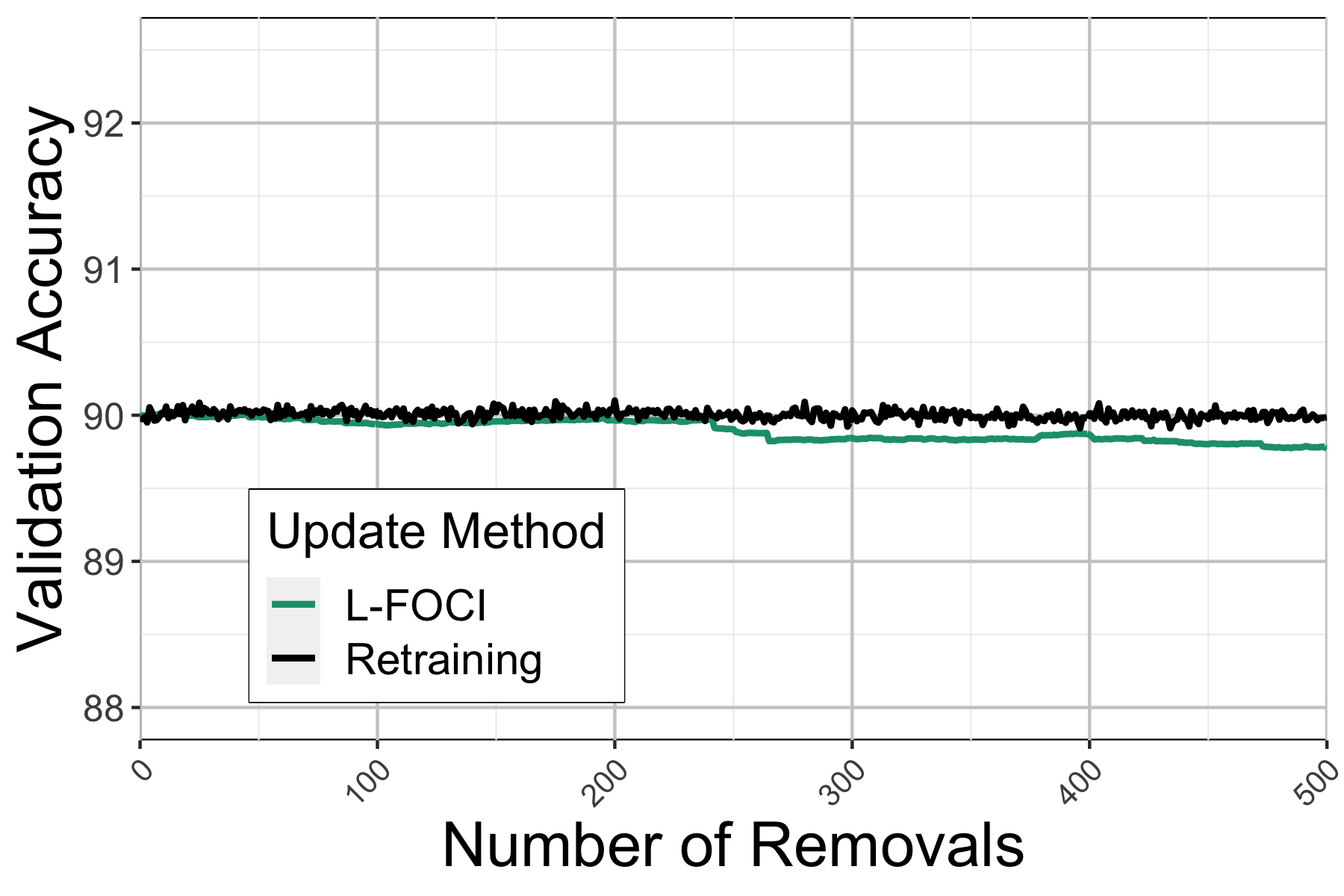}
    \includegraphics[width=0.24\columnwidth]{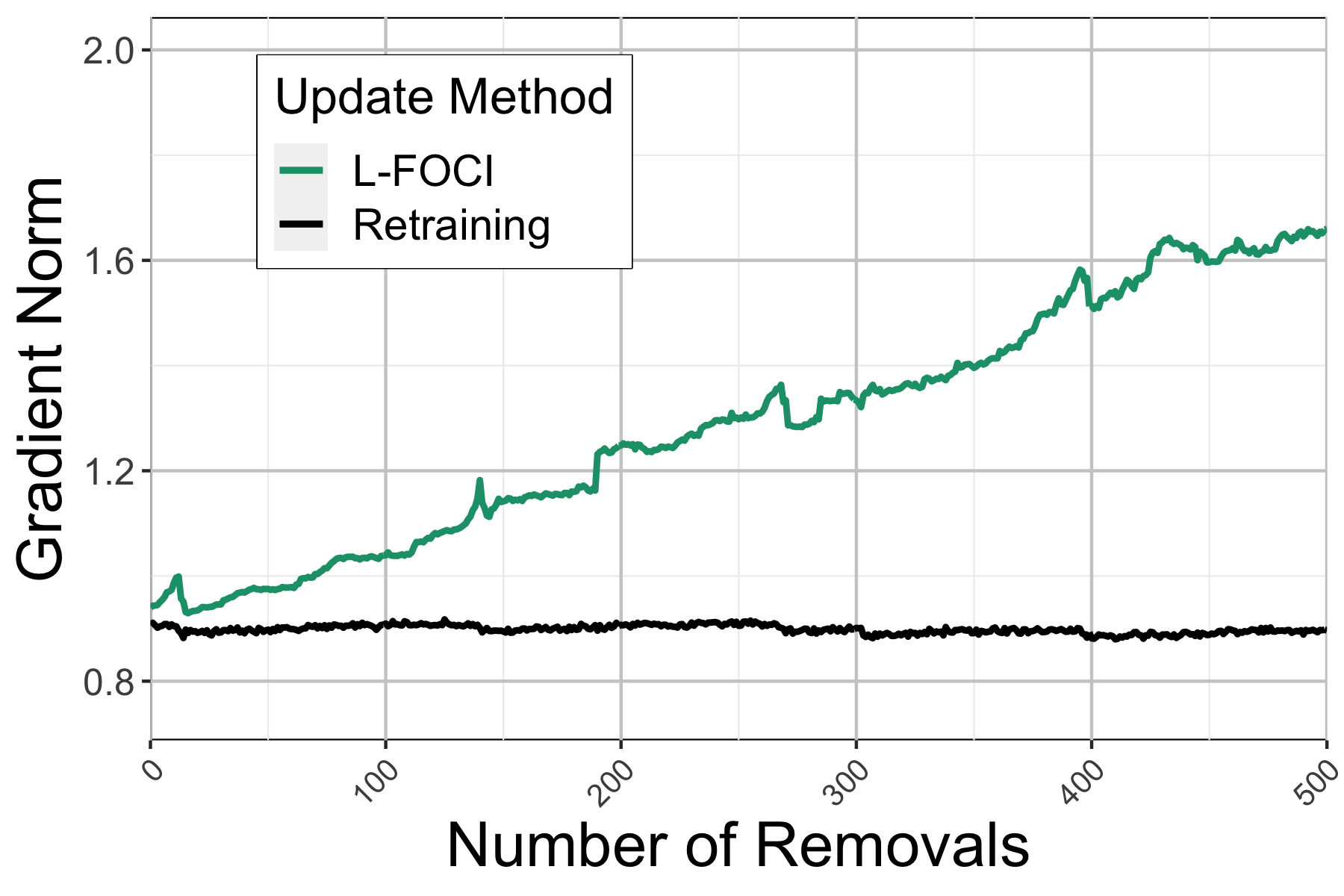}
    \includegraphics[width=0.24\columnwidth]{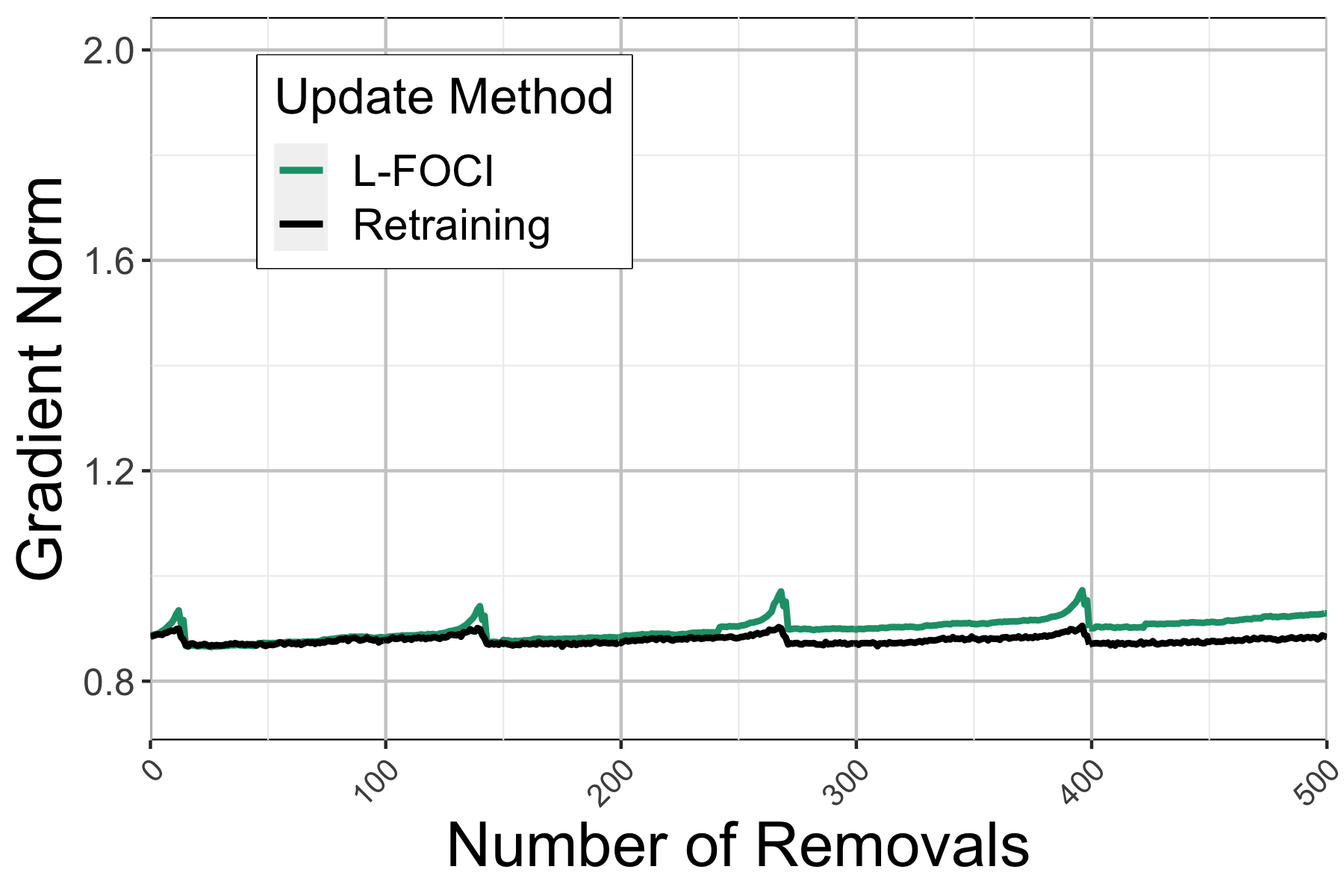}
    \caption{MNIST Retraining results, comparing the effect of weight decay on unlearning via our LFOCI unlearning scheme and retraining.}
    \label{fig:mnistretrainweightdecay}
\end{figure}

\subsubsection{CIFAR Retraining Comparisons: A Note on Batch Normalization}
An important requirement for our retraining experiment is that our residual training set used for both scrubbing validation and retraining is able to take on any size, including 1 and any size for which the modulus over the batch size equals 1. This causes particular problems when models include batch normalization layers: general practice in training deep neural networks includes the choice of dropping the last batch, so as to avoid issues of unbalanced batch sizes. For our setting we \textit{cannot} drop these batches, because we explicitly want to measure and compute on networks trained with and without specific samples. While we can ``skip" removals during our experimentation, this can still lead to odd behavior, see Figure~\ref{fig:cifarretrain}. The spikes are exactly congruent with points in the removal process corresponding to a final batch size of 1 for retraining. In general, care must be taken when attempting to unlearn from batchnorm models, and further work may be necessary to adequately address it, both in theory and practice.
\begin{figure}
    \centering
    \includegraphics[width=0.4\columnwidth]{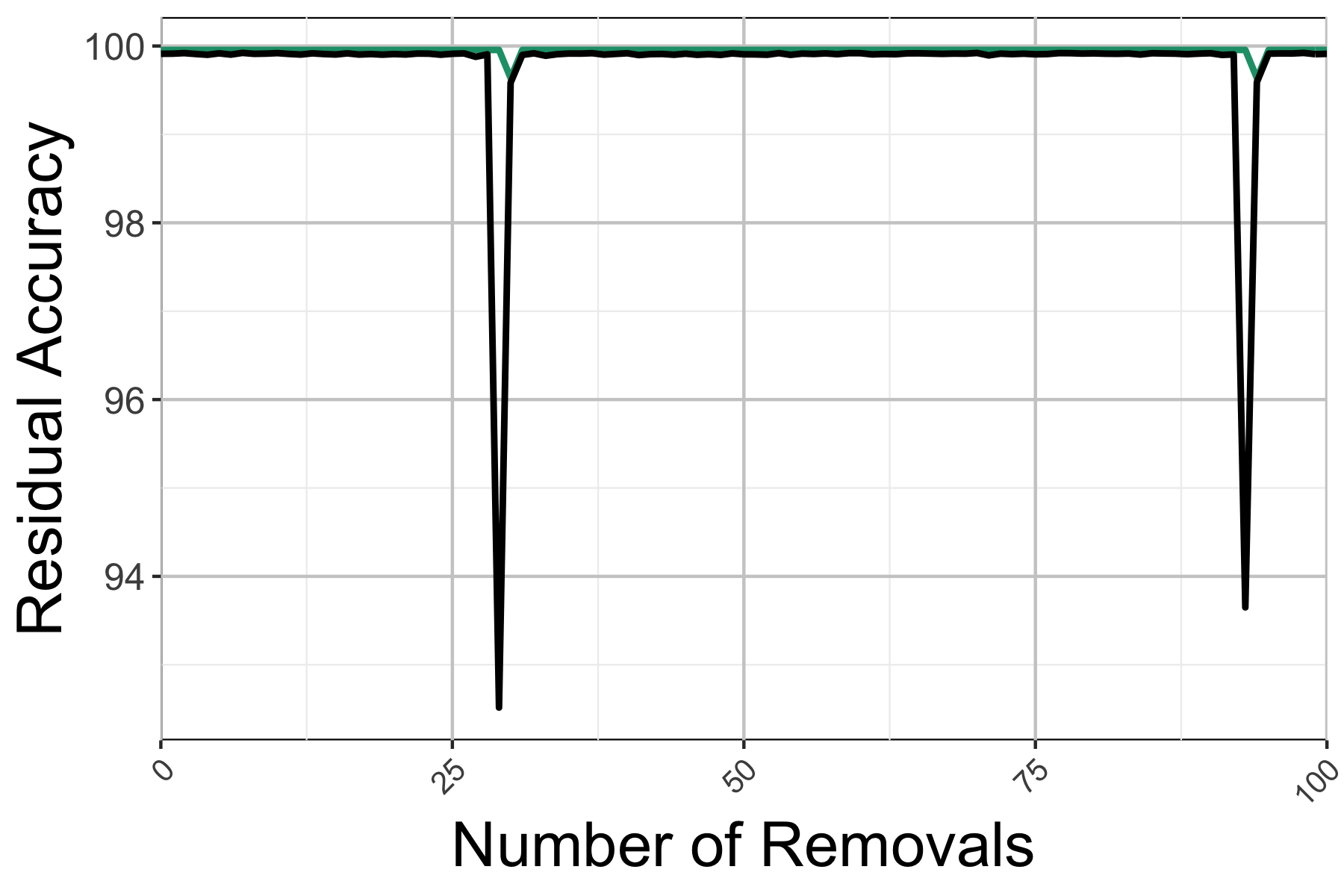}
    \includegraphics[width=0.4\columnwidth]{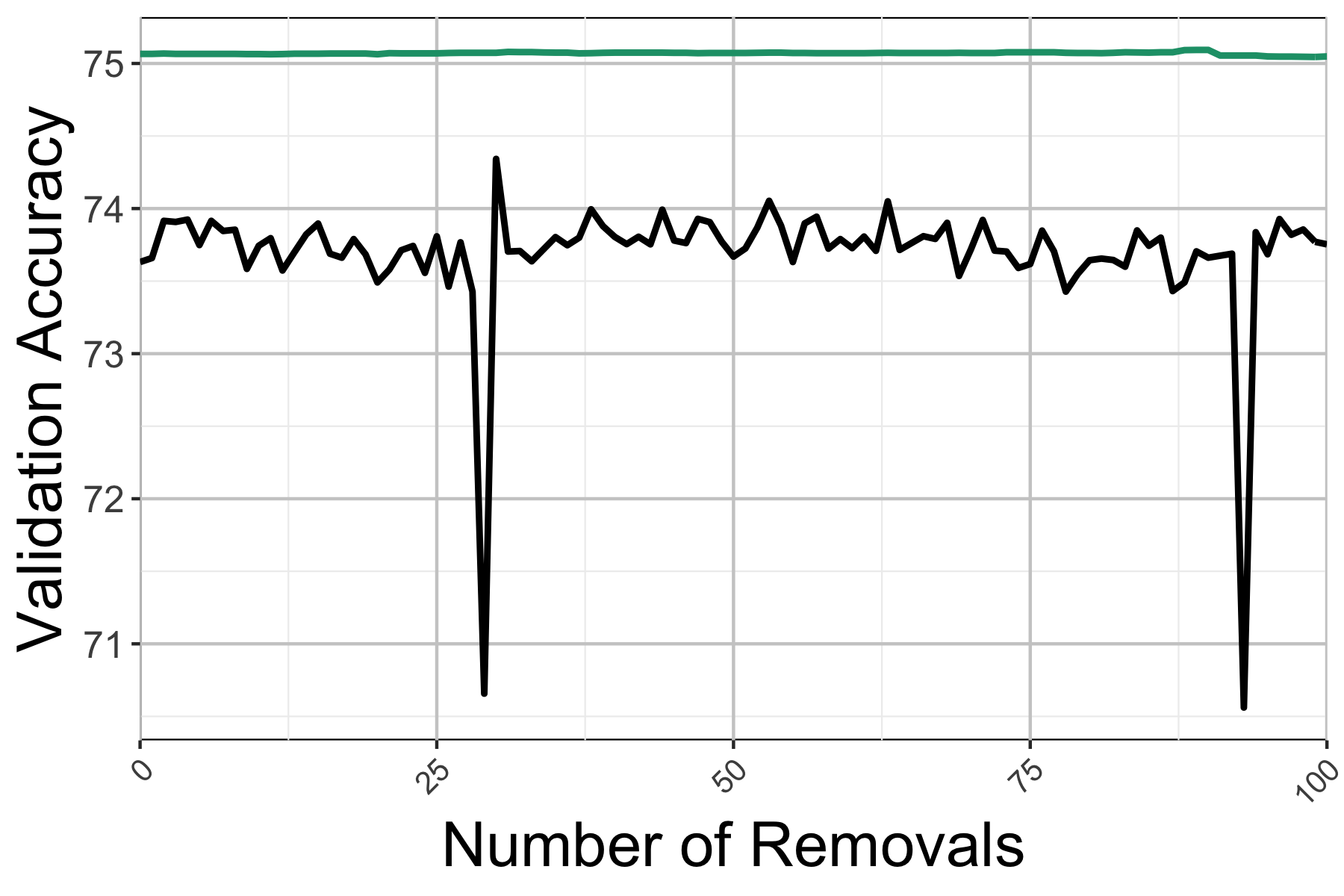}
    \caption{Retraining Results on CIFAR. Dips occur at removal counts where the modulus equals 1.}
    \label{fig:cifarretrain}
\end{figure}

\subsection{CIFAR-10 Model Comparisons}
Models were trained using Torch Hub, with a batch size of 64, learning rate of 0.1 for all models except VGG-11/bn, for which 0.01 was used. Data augmentation was NOT used, and weight decay was set to $0.01$.
1000 perturbations were used for distribution approximation. Privacy parameters were set to $\epsilon=0.1, \delta=0.01$.
Figures and numbers in the main paper were averaged over 2 replications, for a random choice of 1000 samples to scrub.

\subsection{LEDGAR DistillBERT Details}
For the NLP experiments, we used a pretrained model from HuggingFace as a starting point. Specifically, we used the transformer model ``distilbert-base-uncased", \url{https://huggingface.co/distilbert-base-uncased} which is a distilled version of the BERT base model, smaller and faster than BERT. It was pretrained on the same corpus in a self-supervised fashion, using the BERT base model as a teacher. DistilBERT \cite{sanh2019distilbert} was pretrained on the raw texts only, without any human labels. The three losses used for the pre-training are that of distillation loss, masked language modelling and cosine embedding loss. This pre-trained model was then fine-tuned for the downstream task of provision classification using the LEDGAR dataset introduced in \cite{tuggener2020ledgar}. We used the prototypical dataset which had 13 most common labels based on frequency. The model was fine-tuned for 4 epochs, updating all of it's parameters without any freezing based on binary cross entropy loss with class weighting. The labels were converted to one-hot vectors and hence binary cross entropy loss was used. Learning rate used was $5e^{-5}$ and weight decay of $0.01$. No weight decay was applied for bias and normalization layer parameters. We used batch size of 256 and restricted the maximum length of tokens to $128$ per data point. Further gradients were clipped based on the infinity norm to a value of $1.0$. We used AdamW optimizer with an epsilon value of $1e^{-8}$; and the learning rate scheduler used was WarmupLinearSchedule both from PyTorch\_Transformers. 

For unlearning experiments on this model, we remove provisions pertaining to a specific class. We removed samples from two classes namely ``Governing Laws" and ``Terminations" which had the highest and lowest support respectively. We were able to removed a varying number of samples from these classes based on the selection of the privacy parameter of $\epsilon$ for scrubbing. The results are tabulated in the main paper.

\subsection{VGG-Face Identification Scrubbing}
For this setting, the trained model was downloaded from \url{https://www.robots.ox.ac.uk/~vgg/data/vgg_face/} and converted to PyTorch via \url{https://github.com/prlz77/vgg-face.pytorch}. A partial version of the dataset was constructed using the list of image URLs, consisting of 100 images for each identity within the set. The images were processed as described in the original paper \cite{Parkhi15}.

Fine tuning was done for 4 epochs to estimate the Hessian for the sample downloaded using SGD with a learning rate of $0.0001$ and a weight decay/$l_2$ regularization of $0.01$, with a batch size of 16.

For unlearning, 100 images for a specific identity were randomly ordered and removed with $\epsilon=0.0001, \delta=0.01$. 100 perturbations were used to estimate the activation and loss distributions for L-FOCI.

\subsection{Person Re-identification}
We discuss in more detail the experimental details of unlearning deep neural networks for the person re-identification task in this section. We consider four different datasets namely, Market1501 \cite{zheng2015scalable}, MSMT17 \cite{wei2018person}, PRID \cite{hirzer11} and  QMUL GRID \cite{loy2009multi}. We unlearn from different deep neural networks including ResNet50, a variant of ResNet50 with a fully connected layer (called Resnet50\_fc512), Multi-Level Factorisation Net (MLFN) and MobilNet\_V2. In all cases the models were first trained to reasonable accuracy as per benchmarks before proceeding with unlearning a randomly selected individual's identity from the corresponding dataset. To perform experiments pertaining to person re-identification we make use of the popular framework torchreid \cite{zhou2019omni}. We had to make changes to the original code in order to make our procedure function correctly in this framework. We include this modified source within the code presented in the supplement. We use Adam as the optimizer, a step scheduler and learning rate of $0.0003$ across all person re-identification datasets and models used. We use softmax loss and weights were initialized using a model pre-trained on ImageNet in all cases. Images were resized to $256 \times 128$ before being used as input to any of the models. The number of training epochs was chosen accordingly to allow the training to have converged. Results from multiple runs involving different models, datasets and the privacy parameter $(\epsilon)$ are conclusive. With lower value of $\epsilon$, e.g. $0.0005$, the number of samples that could be unlearned for a particular class while maintaining model performance was lower than what could be unlearned for a higher value of the privacy parameter $\epsilon$, e.g. $0.1$. For the smaller datasets, i.e. PRID and QMUL GRID, which have approximately 2 samples per class, the unlearning procedure lead to more drastic changes as expected and it could be observed that our selection procedure selected many more layers to update than what it did for the larger datasets. Activation maps from some experiments are presented in Fig \ref{fig:reid_sup}.

\begin{figure*}
    \centering
    
    \begin{subfigure}[b]{0.98\textwidth}
        \centering
        \includegraphics[width=0.45\columnwidth]{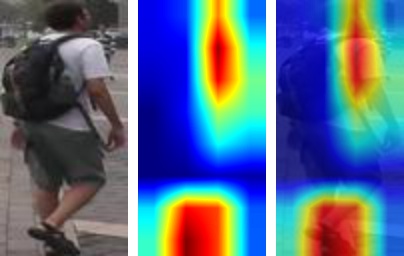}
        \hspace{15pt}
        \includegraphics[width=0.45\columnwidth]{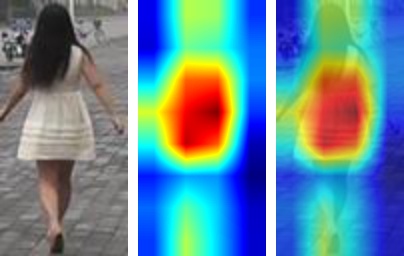}
    \end{subfigure}
    
    \vspace{10 pt}
    
    \begin{subfigure}[b]{0.98\textwidth}
        \centering
        \includegraphics[width=0.45\columnwidth]{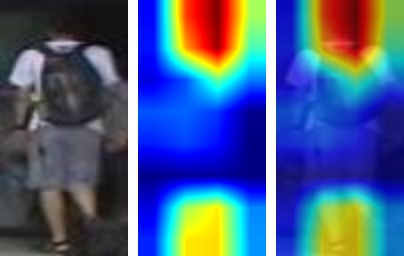}
        \hspace{15pt}
        \includegraphics[width=0.45\columnwidth]{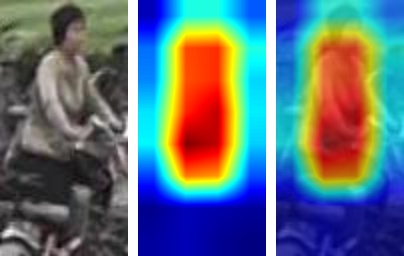}
    \end{subfigure}
    
    \vspace{10 pt}
    
    \begin{subfigure}[b]{0.98\textwidth}
        \centering
        \includegraphics[width=0.45\columnwidth]{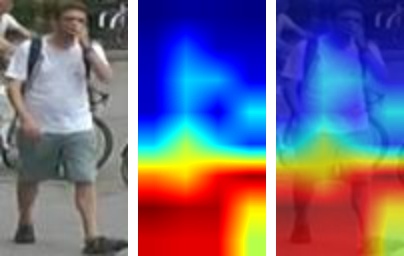}
        \hspace{15pt}
        \includegraphics[width=0.45\columnwidth]{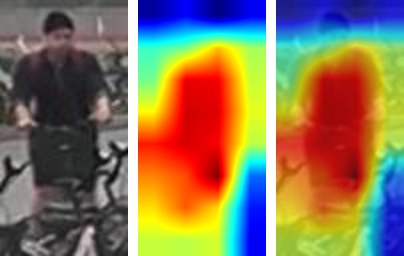}
    \end{subfigure}
    
    \vspace{10 pt}
    
    \begin{subfigure}[b]{0.98\textwidth}
        \centering
    \subfloat[\centering Scrubbed class]{{\includegraphics[width=0.45\columnwidth]{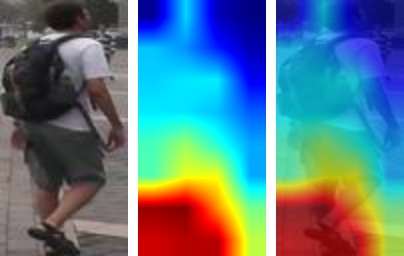}}}%
    \hspace{15pt}
    \subfloat[\centering Unscrubbed class]{{ \includegraphics[width=0.45\columnwidth]{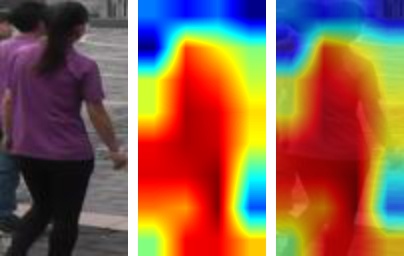}}}%
    
        
    \end{subfigure}
    
    
        
    
    
    \caption{\label{fig:reid_sup}Activation maps from a different models (top two rows correspond to MLFN and bottom two correspond to MobileNet\_V2; both trained on Market\_1501) scrubbed for the person on the left (right set is not scrubbed). For each triplet, from (L to R) are the original image, the activation map and its image overlay. Note the effect of scrubbing: activations change significantly for the scrubbed sample (compare column 2 to 3) whereas remain stable for the non-scrubbed sample (compare column 5 to 6).}
\end{figure*}

\section{Conditional Independence and Parameter Selection via L-CODEC}
Our algorithm is directly adapted from \cite{codec} to our parameter selection setting. Algorithm~\ref{alg:lfoci} shows the procedure, described for arbitrary random variables in Section 5 of \cite{codec}. 

While tests for independence exist, CODEC directly estimates explanation of variance with and without the conditional variable(s) of interest. For readers interested in conditional independence more generally, and statistical and theoretical foundations, please see \cite{spirtes2000causation, dawid1979conditional}. More recent information-based formulations can be found in \cite{bullseye} and references therein.

\begin{algorithm}
\SetAlgoLined
\KwData{$z'$, the full parameter set $w$ indexed by $\Theta := \{1,\ldots,d\}$}
\KwResult{Sufficient set $P \subseteq \Theta$}
Identify $p \in \Theta$ that maximizes $T(z',w_p)$ \\
Set $P = \{p\}$ \\
\While{$T(z',w_{\Theta\setminus P}|w_P) > 0$}{
    Identify $p \in \Theta\setminus P$ that maximizes $T(z',w_{\Theta\setminus P}| w_{\Theta \cup s})$ \\
    \eIf{$T(z',w_{\setminus P}, w_{P \cup p}) < 0$}{break}{Append $P = P \cup p$}
}
 \caption{\label{alg:lfoci} Parameter MB Identification via L-CODEC (L-FOCI)}
\end{algorithm}
\section{Alternate Hessian Approximations}

Typical approximations are non often non-sparse; a key focus of our proposal is a reasonably informed sparse estimation in deep unlearning: we cannot allocate both full networks and the space for an inverse for $50$K$+$ parameters (needs $10+$GB alone). For Deep unlearning specifically, our sub selection makes this possible. Diagonal modification still needs full parameter updates. However, we explored the utilization of other Hessian inverse approximation schemes. More specifically, we implemented an unlearning scheme based on Kronecker-Factored Approximate Curvature (K-FAC) \cite{martens2015optimizing} which exploits an efficient invertible approximation of a deep learning model's Fisher information matrix which can be non-sparse and neither low rank nor diagonal. In an experimental setup, we perform unlearning based on K-FAC from an multi layer perceptron model trained on MNIST dataset. We don't see any observable updates happening to the model based on validation metrics. Whereas, the exact same model with the exact same set of parameters can unlearn the same set of data-points using our proposed deep unlearning method based on LCODEC. We would like to point out that in order to unlearn from deep models using existing approximations schemes like K-FAC, we might have to re-imagine the update step. This demands further investigation. In other words our procedure may not be more broadly applicable to non-sparse general Hessian inverse approximations without an obvious CI structure.

We have included the code to compare K-FAC based unlearning with LCODEC based unlearning in the file \url{https://github.com/vsingh-group/LCODEC-deep-unlearning/blob/main/scrub/kfac\_scrub.py}

In our implementation we heavily rely on the KFAC approximations of the Hessian as provided in \url{https://github.com/cybertronai/autograd-lib} . More instructions can be found in the README.




\end{document}